\documentclass[10pt,twocolumn,letterpaper]{article}

\usepackage{cvpr}
\usepackage{graphicx}
\usepackage{comment}
\usepackage{amsmath,amssymb} 
\usepackage{color}
\usepackage{times}
\usepackage{epsfig}

\usepackage{amsfonts}       
\usepackage{amsthm}   

\newcommand{\R}{{\mathbb{R}}}

\newcommand{\A}{{\mathcal{A}}}
\newcommand{\E}{{\mathbb{E}}}
\newcommand{\be}{\begin{eqnarray}}
  \newcommand{\ee}{\end{eqnarray}}
\newcommand{\beq}{\begin{equation}\begin{aligned}}
  \newcommand{\eeq}{\end{aligned}\end{equation}}
\newcommand{\beqn}{\begin{equation*}\begin{aligned}}
  \newcommand{\eeqn}{\end{aligned}\end{equation*}}
\newcommand{\ben}{\begin{eqnarray*}}
  \newcommand{\een}{\end{eqnarray*}}

\newtheorem{prop}{Proposition}
\newtheorem{lemma}{Lemma}

\usepackage{algorithm}
\usepackage{algorithmic}

\usepackage{multirow}
\usepackage{booktabs}       

\usepackage[pagebackref=true,breaklinks=true,letterpaper=true,colorlinks,bookmarks=false]{hyperref}

\usepackage{authblk}
\cvprfinalcopy
\ifcvprfinal\pagestyle{empty}\fi
\begin{document}

\title{Blind Adversarial Training: \\Balance Accuracy and Robustness} 
\author[1]{Haidong Xie}
\author[1]{Xueshuang Xiang\thanks{Corresponding author: xiangxueshuang@qxslab.cn}}
\author[1]{Naijin Liu}
\author[2,3,4]{Bin Dong}
\affil[1]{\normalsize Qian Xuesen Laboratory of Space Technology, China Academy of Space Technology}
\affil[2]{\normalsize Beijing International Center for Mathematical Research, Peking University}
\affil[3]{\normalsize Center for Data Science, Peking University}
\affil[4]{\normalsize Beijing Institute of Big Data Research}

\date{}

\maketitle

\begin{abstract}
 Adversarial training (AT) aims to improve the robustness of deep learning models by mixing clean data and adversarial examples (AEs). Most existing AT approaches can be grouped into restricted and unrestricted approaches. Restricted AT requires a prescribed uniform budget to constrain the magnitude of the AE perturbations during training, with the obtained results showing high sensitivity to the budget. On the other hand, unrestricted AT uses unconstrained AEs, resulting in the use of AEs located beyond the decision boundary; these overestimated AEs significantly lower the accuracy on clean data. These limitations mean that the existing AT approaches have difficulty in obtaining a comprehensively robust model with high accuracy and robustness when confronting attacks with varying strengths. Considering this problem, this paper proposes a novel AT approach named blind adversarial training (BAT) to better balance the accuracy and robustness. The main idea of this approach is to use a cutoff-scale strategy to adaptively estimate a nonuniform budget to modify the AEs used in the training,  ensuring that the strengths of the AEs are dynamically located in a reasonable range and ultimately improving the overall robustness of the AT model. The experimental results obtained using BAT for training classification models on several benchmarks demonstrate the competitive performance of this method.
\end{abstract}

\section{Introduction}
\label{sec:intro}

Deep learning~\cite{Hinton_Deep_learning,Goodfellow_2016} has made great breakthroughs in many fields, such as computer vision~\cite{Krizhevsky2012ImageNet}, speech recognition~\cite{Mikolov2012Strategies,Hinton2012Deep}, and natural language processing~\cite{Sutskever2014Sequence}. However, after \textbf{adversarial examples (AEs)} were introduced~\cite{szegedy2013intriguing,2014arXiv1412.6572G}, the weakness of deep neural networks has attracted  increasingly more attention. Many effective AE generation methods and defensive strategies are proposed; see the review papers and references therein for details~\cite{akhtar2018threat,Zhang2018Adversarial}.

\begin{figure}[t]
  \centering
  \includegraphics[width=\linewidth]{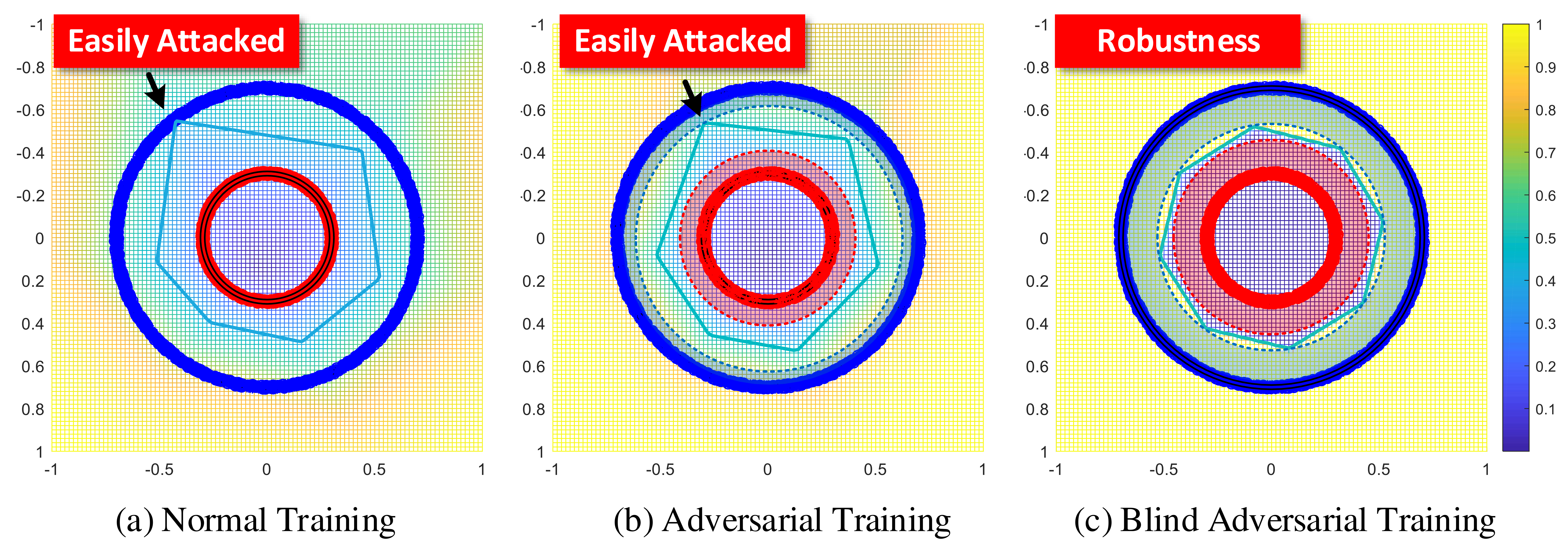}
  \caption{Comparison of different training approaches employing the two circles classification problem with the $1$-hidden-layer $6$-dim perceptron. We show the results of AT with a budget of $0.1$. The solid blue/red lines correspond to the datasets of two labels, the solid gray lines represent the decision boundary of the classifiers, and the blue/red shadow zones in (b,c) show the manifold of AEs.}
  \label{fig:tcc}
\end{figure}

\begin{figure*}[t]
  \centering
  \includegraphics[width=1\linewidth]{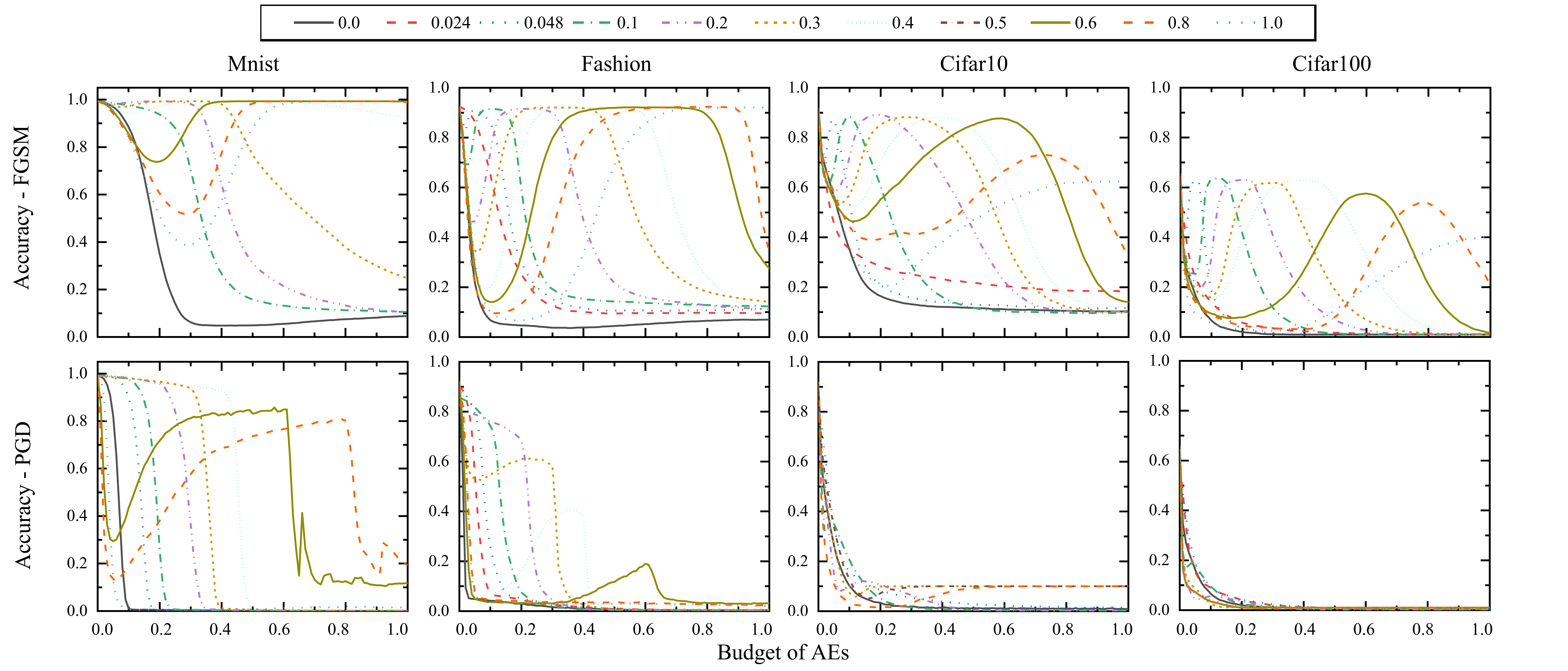}
  \caption{Comparison of the accuracy against attacks with varying budgets (x-axis) of the AT models (FGSM-AT/PGD-AT) with varying budgets (the lines as the legend) on MNIST, Fashion-MNIST, CIFAR10 and CIFAR100. For more results, see Appendix~C.}
  \label{fig:fgsm_pgd}
\end{figure*}

\textbf{Adversarial training (AT)} is a process of training a neural network on a mixture of clean data and AEs in order to improve the robustness of the network against adversarial attacks; see~\cite{szegedy2013intriguing,2014arXiv1412.6572G,madry2017towards} for white-box attacks and~\cite{kurakin2016adversarialb,tramer2017ensemble,song2018improving} for black-box attacks. These approaches focus on improving the generalization ability of AT by modifying the AE generation method or training loss. However, the AEs perturbations during training is the main problem in this approach. Most existing AT approaches can be divided into two categories, restricted and unrestricted AT. For both of the approaches, it was proved numerically~\cite{madry2017towards} and theoretically~\cite{Tsipras2018} that an improvement in the robustness is always accompanied by a loss of accuracy. 
Furthermore, the limitation of incurring a minimal impact on accuracy requires the architecture of the network to be sufficiently expressive~\cite{akhtar2018threat,Zhang2018Adversarial}, as guaranteed by the universal approximation theorem~\cite{2014arXiv1412.6572G,hornik1989multilayer} and regularization~\cite{Sank_2017}; even so, it is still difficult to achieve improved robustness~\cite{2014arXiv1412.6572G,tramer2017ensemble,7546524}. Therefore, we first analyze the characteristics of existing AT approaches.

The restricted AT approaches use norm-constrained AEs, such as FGSM-AT~\cite{2014arXiv1412.6572G} and PGD-AT~\cite{madry2017towards} with using FGSM and PGD AEs respectively, require a prescribed uniform budget to constrain the magnitude of the AE perturbation during training and then evaluate the result on the AEs with the same budget. The model obtained with the prescribed budget is only robust when confronting the attack with the same strength, while it is clearly weak for an attack stronger than the prescribed budget, and is overly defensive when confronting a small attack or encountering clean data. Madry \etal.~\cite{madry2017towards} found that a large budget is necessary to improve the AT effectiveness, but possibly lowering the accuracy on clean data, this result was also verified numerically by Song \etal.~\cite{song2018improving}. We analyze this issue using a synthetic problem, namely, the two circles classification (TCC) problem~\cite{website:Dplay}. As shown in Figure~\ref{fig:tcc}~(a), a classifier employing normal training (NT) can be trained with an accuracy of $100\%$ but easily attacked. For AT with a budget of $0.1$, as shown in Figure~\ref{fig:tcc}~(b), we obtain a more robust model that can only defend against the attacks with a strength less than the budget of $0.1$; this model is still easily attacked when encountering a larger attack. Furthermore, it is found from TCC that the AT with excessive budget cannot ensure the robustness; this is attributed to the AEs from different labels going beyond the decision boundary and overlapping or touching each other, preventing the model updating for greater robustness. Moreover, the results of FGSM-AT and PGD-AT on several datasets also show the deficiencies of the prescribed budget, as shown in Figure~\ref{fig:fgsm_pgd} and Appendix~C. 

By contrast, the unrestricted AT approaches are not affected by the prescribed budget and use unconstrained AEs with enough perturbations (beyond the decision boundary). To date, many types of unconstrained AEs have been examined; for example, DeepFool~\cite{moosavi2016deepfool} attempts to obtain the smallest AEs aiming for the decision boundary, and CW~\cite{cw} can obtain AEs that balance the perturbation and confidence. While we can directly apply these AEs to AT, the basic motivation of unconstrained AEs is to attack while aiming to fool the model by using the AEs beyond the decision boundary. DDN-AT~\cite{DBLP:journals/corr/abs-1811-09600} decouples the direction and norm of gradient-based attacks while inheriting the advantages of high computational efficiency, while MMA training~\cite{DBLP:journals/corr/abs-1812-02637}, focuses on maximizing the margins, which is an alternative approach for selecting the budget for each point individually. These methods inherit the perturbation of AEs and change the norm toward the decision boundary. Based on the obtained numerical experience,  the  loss function gradients are unstable in this case, leading to  dramatic decision boundary fluctuations, and giving rise to a very large number of AEs with too large strength; this severely decreases the robustness and lowers the accuracy on clean data. Furthermore, the discussion below indicates that the unconstrained AEs on the decision boundary cannot ensure the best robustness.

Addressing the above issues for both restricted and unrestricted ATs, this paper theoretically analyzes the impact of the AEs budgets on AT using a simple classification problem, and proposes a novel AT approach named \textbf{blind adversarial training (BAT)} that uses a \textbf{cutoff-scale~(CoS)} strategy to ameliorate the generation of the AEs during training. The steps of the BAT procedure are summarized as follows: after obtaining the AEs by DeepFool at each AE generation step, we \textbf{cutoff} the AEs by an adaptive budget (the mean value of the norm of the current AEs), and then uniformly \textbf{scale} them. The cutoff is used to fix the AEs with large or even unreasonable strength, so as to approach  the perfect decision boundary. The scale is used to prevent the AEs from going over the decision boundary. Both strategies will adaptively estimate a nonuniform budget and ensure that the AEs are located in a reasonable range in blind; this approach tends to obtain a model that is robust overall.

As shown in Figure~\ref{fig:tcc}~(c) for the TCC problem, BAT can generate a model with an exactly regular octagon decision boundary, i.e., the model with the best robustness within the given network architecture, without budget prescribed. Furthermore, compared with the FGSM-, PGD- and DeepFool-based ATs, using BAT to train LeNet-5 on MNIST, Fashion-MNIST and SVHN,  FitNet-4 on CIFAR10 and CIFAR100, we can obtain models with comprehensive robustness, obtaining both high accuracy and robustness for various white/black-box attacks (including the FGSM, Noise, PGD, DeepFool and CW attacks) with varying attack strength. In addition, for MNIST, we clearly find that both the cutoff and scale strategies make significant individual contributions to BAT. 

\section{Methodology}

\subsection{Restricted Adversarial Training}

We consider a standard classification task with dataset $\{\mathbf{x}, \mathbf{y}\}$ and minimizing $\min_\theta \mathbb{E}_{(\mathbf{x},\mathbf{y})} [ \mathcal{L}(\theta,\mathbf{x},\mathbf{y}) ]$, where $\mathcal{L}$ is the loss function with weights $\theta$. Adversarial training (AT) is also called brute-force AT, first proposed by Szegedy \etal.~\cite{szegedy2013intriguing} and further developed by Goodfellow \etal.~\cite{2014arXiv1412.6572G}. The core idea of AT is to enhance the robustness through adding the AEs to the training data; here, the total loss function can be written in a general form $\min_\theta \mathbb{E}_{(\mathbf{x},\mathbf{y})} [\mathcal{L}(\theta,\mathbf{x},\mathbf{y}) +\lambda \mathcal{L}(\theta,\mathbf{x}+\delta(\mathbf{x}),\mathbf{y})]$, where $\mathbf{x}+\delta(\mathbf{x})$ represents the AEs of data $\mathbf{x}$ and is usually set to $\lambda = 1$. AT will alternately generate AEs and optimize network parameters until the levels of accuracy on clean data and AEs converge. To generate AEs, Madry \etal.~\cite{madry2017towards} introduced a set of allowed perturbations $\mathcal{S}$ that formalize the manipulative power of the adversary, usually using the $\ell_\infty$-ball around $\mathbf{x}$ with the budget $\varepsilon$ as $\mathcal{S}$~\cite{2014arXiv1412.6572G}, so that the AT process can be reformulated as $\min_\theta \mathbb{E}_{(\mathbf{x},\mathbf{y})} \max_{\delta(\mathbf{x}) \in \mathcal{S}} \mathcal{L}(\theta, \mathbf{x}+\delta(\mathbf{x}),\mathbf{y})$. This saddle point optimization problem specifies a clear goal of a robust classifier. The inner ``$\max$'' (adversarial loss) aims to find an AE of the given data $\mathbf{x}$, while the outer ``$\min$'' finds a model that minimizes the “adversarial loss”. These AEs can be easily simplified to the widely accepted and used FGSM or PGD~\cite{madry2017towards}. 

Most neural networks are highly non-linear and complex. To theoretically analyze AT, we start from a simplified $1$-layer perceptron defined as, $y=\sigma(W\mathbf{x} + b)$. For classifying two points $\mathbf{x}^1, \mathbf{x}^2$, it is easy to know that the decision boundary of the model with \textbf{best robustness} falls on the perpendicular bisector of the two points. The following proposition can guarantee improved performance of AT (details in Appendix~D.3), 
\begin{prop} For classifying two points $\mathbf{x}^1, \mathbf{x}^2$ with a $1$-layer perceptron model, while both restricted AT and NT can obtain the model with the best robustness, the performance of restricted AT is much better than that of NT because the prescribed budget accelerates the learning process.
\end{prop}

In fact, the best budgets (with maximum acceleration) of different data are different and unpredictable, so that any prescribed uniform budget may lead some of the AEs to be located beyond the decision boundary; thus, the use of a uniform budget is not appropriate. Meanwhile, numerical results show that the robustness is directly related to the budget, so that the choice of  the budget is important and is a difficult problem in restricted AT. 

\subsection{Unrestricted Adversarial Training}

To blind the budget of restricted AT, an intuitive approach is to use the unconstrained AEs (with both the attacking ability and the minimal norm) during AT, formally 
\begin{equation}
\min_\theta \mathbb{E}_{(\mathbf{x},\mathbf{y})} \max_{\delta(\mathbf{x})} \{\mathcal{L}(\theta, \mathbf{x}+\delta(\mathbf{x}),\mathbf{y}) - \lambda||\delta(\mathbf{x})||\}, 
\label{eq:opt_df_at}
\end{equation}
where $\mathcal{L}(\theta, \mathbf{x}+\delta(\mathbf{x}),\mathbf{y})$ term corresponds to guarantee the attacking ability, and $\lambda||\delta(\mathbf{x})||$ term constraint the norm small.
An example of an alternative approach to generate the AEs is DeepFool~\cite{moosavi2016deepfool}, which minimizes the $\ell_2$ norm of AEs slightly beyond the decision boundary of the current model. The AT with unconstrained AEs may obtain a more robust model than NT, but cannot ensure better performance than restricted AT and incur a heavy cost in terms of accuracy loss. Similarly, we state the following proposition (details in Appendix~D.4),
\begin{prop}
For classifying two points $\mathbf{x}^1, \mathbf{x}^2$ with a $1$-layer perceptron model, 
unrestricted AT cannot be used to obtain the model with the best robustness.
\end{prop}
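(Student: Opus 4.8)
The plan is to reduce the problem to one dimension along the segment joining $\mathbf{x}^1$ and $\mathbf{x}^2$ and then compare the model that unrestricted AT actually converges to against the best-robustness model. I would parametrize points on this segment by their signed position, placing $\mathbf{x}^1$ at $0$ and $\mathbf{x}^2$ at $1$; recall from the discussion preceding Proposition~1 that the best-robustness model is the one whose decision boundary is the perpendicular bisector, i.e. the threshold at position $1/2$ with maximal margin $1/2$. Writing the perceptron logit as $w(x-d)$ with slope $w>0$ and boundary position $d$, I would characterize the fixed points of the alternating procedure ``generate unconstrained AEs at the current $d$, then minimize the augmented loss of~\eqref{eq:opt_df_at}'' and show that no such fixed point can coincide with the best-robustness classifier.

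The key structural step is to pin down where the unconstrained AEs land. By construction an unrestricted AE carries enough perturbation to sit just \emph{beyond} the current decision boundary, so for a boundary at position $d$ the AE of $\mathbf{x}^1$ (true label $1$) lies at $d+\eta$ and the AE of $\mathbf{x}^2$ (true label $2$) lies at $d-\eta$ for a small overshoot $\eta>0$. I would then isolate the central phenomenon: because $d+\eta>d-\eta$, the class-$1$ AE sits on the class-$2$ side of the boundary and the class-$2$ AE sits on the class-$1$ side. Evaluated at the candidate best-robustness boundary $d=1/2$, the augmented training set $\{(0,1),(1/2+\eta,1),(1,2),(1/2-\eta,2)\}$ is therefore linearly \emph{non-separable}: no single threshold can place $1/2+\eta$ on the class-$1$ side while placing the smaller value $1/2-\eta$ on the class-$2$ side. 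Restricted AT with a budget below the margin avoids exactly this, since its AEs stay at $\eta$ and $1-\eta$ on the correct sides and keep the augmented set separable with the perpendicular bisector as its max-margin solution; this is what makes Proposition~1 go through and identifies the overlap as the sole obstruction in the unrestricted case.

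From non-separability I would conclude that the adversarial-augmented loss cannot be driven to zero and that its minimizer must trade the two mislabeled interior AEs against the two clean endpoints, so the slope $w$ is forced to a finite value and the converged decision function is not the best-robustness classifier that separates the two points at margin $1/2$. The main obstacle I anticipate is strictness: under the reflection symmetry $x\mapsto 1-x$ combined with the label swap, the optimal boundary position may itself remain pinned at $1/2$, so the argument cannot rest on a displacement of the boundary alone. I would therefore have to establish suboptimality through the robustness quantity itself, arguing that the irreducible loss contributed by the overlapping, necessarily misclassified AEs strictly degrades the model (equivalently, prevents the confident max-margin separation the best-robustness model attains) and that this gap persists for every admissible overshoot $\eta>0$ and every trade-off weight $\lambda$ in~\eqref{eq:opt_df_at}. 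Handling this finite-weight trade-off carefully, rather than the boundary location, is where the real work lies; the non-separability established above is the lever that forces the conclusion once that comparison is made precise.
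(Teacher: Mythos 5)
Your reduction to one dimension and your description of where unconstrained AEs land (just beyond the current boundary, hence on the wrong side) are fine, but the lever you chose --- non-separability of the augmented set --- cannot deliver the proposition, and you essentially concede this yourself. Non-separability only shows the augmented loss has no zero; it says nothing about where the decision boundary of a fixed point of the alternating procedure sits. By the reflection symmetry you note, $d=1/2$ remains a stationary configuration: the two misclassified AEs at $1/2\pm\eta$ exert equal and opposite forces on the boundary position, so the converged model can keep its boundary exactly on the perpendicular bisector, and the only thing the AEs actually force is a smaller slope $w$. Your fallback --- that the irreducible loss ``degrades the model'' by preventing confident max-margin separation --- proves the wrong statement: in this paper robustness means the distance from the data to the decision boundary, so a model with boundary at $1/2$ and small $w$ is exactly as robust as one with large $w$. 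Completing your plan along the lines you sketch would therefore not contradict best robustness at all; the ``real work'' you defer is not a technical refinement but the entire content of the proposition.

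The paper's proof runs on a different mechanism, which is what your proposal is missing. Its Lemma places both AEs on the current decision boundary ($\eta_1+\eta_2=2$, so both AEs coincide at the boundary point), and then in the first-order conditions the two opposite-label error terms combine into $[2\sigma(W_1(1-\eta_2)+b)-1]\,\sigma'(W_1(1-\eta_2)+b)$, which vanishes identically because $\sigma=1/2$ on the boundary. Hence the AEs are invisible to the gradient: unrestricted AT inherits exactly the (slow) NT dynamics and cannot do better than NT. Moreover, at any off-center state $b\neq 0$ --- which training must pass through --- the paper shows that with AEs on or beyond the current boundary the corrective gradient $\partial\mathcal{L}/\partial b$ is nearly zero, so the model stagnates and never effectively reaches the bisector; by contrast, AEs held strictly inside the boundary (the restricted case) maximize that gradient. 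In short, the paper's failure mode is gradient cancellation and stagnation, not irreducible loss. Your observation that the AEs cross to the wrong side is the same geometric fact the paper exploits, but routed through loss values rather than through the gradient structure it cannot reach the conclusion.
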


While simply using unconstrained AEs in unrestricted AT avoids the shortcoming of choosing a prescribed budget, the AEs located on the decision boundary ensure that the model with the best robustness cannot be obtained. On the contrary, this will lead to obtaining a model with a more poor decision boundary.


\subsection{Blind Adversarial Training}
\label{sec:cos}

Therefore, to alleviate the drawbacks of the restricted and unrestricted AT, we propose a {cutoff-scale}~(CoS) strategy based on the DeepFool-AT (DF-AT) in \eqref{eq:opt_df_at}, and name our approach blind adversarial training (BAT). We choose DeepFool AEs in AT because the unrestricted AE method DeepFool aims for the decision boundary of a model and obtains AEs that are slightly beyond the boundary with relatively small computational complexity compared to the other similar methods. BAT can be formulated as 
\begin{eqnarray}
\min_\theta \mathbb{E}_{(\mathbf{x},\mathbf{y})} \max_{\delta(\mathbf{x})} \{\mathcal{L}(\theta,  \underbrace{\mathbf{x}+\rho \delta(\mathbf{x})}_{\rm Scale},\mathbf{y}) \nonumber\\
- \lambda_1||\delta(\mathbf{x})|| - \underbrace{\lambda_2(||\delta(\mathbf{x})|| - \varepsilon)_{+}}_{\rm Cutoff}\}, 
\label{eq:opt_bat}
\end{eqnarray}
where coefficient $\lambda_2 \gg \lambda_1$, and we introduce two parameters $\varepsilon$ and $\rho$ to monitor the cutoff and the scale process, respectively. $\mathcal{L}(\theta, \mathbf{x}+\delta(\mathbf{x}),\mathbf{y})$ and $\lambda||\delta(\mathbf{x})||$ terms work the same as unrestricted AT for guarantee the attacking ability and minimal norm.

The motivation of the cutoff and scale is to ensure that the AEs are dynamically located in a reasonable range such that the AT model can be robust when encountering an attack with varying strength. We use the cutoff to further penalize the AEs with a norm larger than the budget $\varepsilon$. Similar to the inner problem in \eqref{eq:opt_df_at}, we can alternatively solve the inner problem in \eqref{eq:opt_bat} followed by the DeepFool AEs. Due to the dramatic fluctuations in the decision boundary during training, the cutoff can avoid the AEs with a large or even unreasonable strength, e.g., some failure or overestimation AEs of DeepFool,  limiting the AEs to lie within the perfect decision boundary. We use the scale to prevent the AEs from going over the decision boundary, i.e., preventing the AEs from different labels from touching each other. 

The procedure of BAT is given in Algorithm \ref{alg:bat}. Starting with the AEs generated by DeepFool which corresponds to only maximizing the first two terms in the objective of \eqref{eq:opt_bat} (with $\rho=1$), we simply calculate the third term (equivalent to minimizing $(||\delta(\mathbf{x})|| - \varepsilon)_{+}$) by cutting off the perturbations of AEs with a norm larger than $\varepsilon$, defined by $\text{cut}\{\delta(\mathbf{x}),\varepsilon\}$
: if $||\delta(\mathbf{x})|| > \varepsilon$, $\delta(\mathbf{x}) \leftarrow \delta(\mathbf{x}) / ||\delta(\mathbf{x})|| \cdot \varepsilon$; otherwise, no change on $\delta(\mathbf{x})$.
Then, we scale the new perturbations with weight $\rho$ and add these CoS AEs into the training process, i.e., $\{\mathbf{x}+\rho \delta(\mathbf{x})\}$. 
We set $\varepsilon= \mathbb{E}||\delta(\mathbf{x})||$ and $\rho<1$ (a predefined parameter), corresponding to adaptively estimating a nonuniform budget. 
$\text{cut}\{\delta(\mathbf{x}),\mathbb{E}||\delta(\mathbf{x})||\}$ implies that the budget $\varepsilon\leftarrow \mathbb{E}||\delta(\mathbf{x})||$ is computed prior to cutting off the perturbations $\delta(\mathbf{x})$.  

Similar to the restricted and unrestricted AT, it can be proved theoretically for BAT that (details in Appendix~D.5),
\begin{prop}
For classifying two points $\mathbf{x}^1, \mathbf{x}^2$ with a $1$-layer perceptron model, 
BAT also can obtain the model with the best robustness, and has the same convergence property as restricted AT with best budget.
\end{prop}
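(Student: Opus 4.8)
The plan is to work entirely in the affine geometry of the $1$-layer perceptron, where the decision boundary is a hyperplane $H$ and the best-robustness model is the perpendicular bisector of $\mathbf{x}^1,\mathbf{x}^2$, characterized by $W$ parallel to $\mathbf{x}^1-\mathbf{x}^2$ and $H$ passing through the midpoint $\tfrac{1}{2}(\mathbf{x}^1+\mathbf{x}^2)$. Because the model is affine, DeepFool's minimal perturbation is exactly the orthogonal projection of $\mathbf{x}^i$ onto $H$, so $\|\delta(\mathbf{x}^i)\|$ equals the current margin $d_i=\mathrm{dist}(\mathbf{x}^i,H)$ and the scaled, possibly cutoff AE $\mathbf{x}^i+\rho\,\mathrm{cut}\{\delta(\mathbf{x}^i),\varepsilon\}$ is a step of controlled magnitude toward $H$. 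First I would record that the scale $\rho<1$ keeps each AE strictly inside its own half-space, which is the feature that fails for unrestricted AT in Proposition 2 (there the two opposite-label AEs both land on $H$ and collide near the midpoint, obstructing convergence to the bisector). This reduces both claims of the proposition to tracking $H$ under BAT's generate-then-update loop.

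For the best-robustness claim I would first verify that the bisector is a fixed point: by symmetry $d_1=d_2$, so $\varepsilon=\mathbb{E}\|\delta\|$ equals each distance, the cutoff is inactive, and the two scaled AEs are symmetric about $H$; hence the adversarial loss is symmetric and its gradient leaves $H$ on the bisector. Next I would show this is the unique attracting configuration. Away from symmetry, say $d_1>d_2$, the adaptive budget satisfies $d_1>\varepsilon>d_2$, so the cutoff clips only the larger perturbation while leaving the smaller untouched; the resulting asymmetric pair of scaled AEs pulls $H$ so as to equalize the two margins and rotate $W$ toward $\mathbf{x}^1-\mathbf{x}^2$. Combining the monotone decrease of $|d_1-d_2|$ and of the angle $\angle(W,\mathbf{x}^1-\mathbf{x}^2)$ with the fact that $\rho<1$ prevents any overshoot across the bisector yields convergence to it.

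For the convergence-property claim I would reduce a single BAT step to a restricted-AT step. Since both place the AE along the same projection direction on an affine model, the two updates coincide once the restricted budget is set to the effective value $\varepsilon^i_{\mathrm{eff}}=\rho\min(d_i,\varepsilon)$. Proposition 1 already gives that restricted AT reaches the best-robustness model and that the budget controls the acceleration, with a best budget achieving maximum acceleration. It then remains to show that BAT's adaptive rule makes $\varepsilon^i_{\mathrm{eff}}$ track that best budget: because $\varepsilon=\mathbb{E}\|\delta\|$ keeps the effective budget proportional to the current margins, as $H$ approaches the bisector the effective budget approaches the best budget, so the per-step contraction factor of the boundary update matches that of restricted AT and BAT inherits the same (linear) convergence rate.

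The main obstacle is this rate-matching step. It requires making the qualitative ``acceleration'' of Proposition 1 quantitative, i.e.\ writing the boundary update as a contraction and expressing its factor as an explicit function of the budget, and then checking that the data-dependent $\varepsilon^i_{\mathrm{eff}}$ hits the maximizer. The delicate regime is the transient where $d_1\neq d_2$: the cutoff is active, the two AEs are asymmetric, and the update couples the orientation and offset of $H$, so it is not a clean one-dimensional contraction. Controlling this coupled, piecewise-linear dynamics (piecewise because of the $(\cdot)_+$ in the cutoff) and certifying no overshoot past the bisector is the crux; by comparison the fixed-point and best-robustness arguments are routine symmetry and projection computations.
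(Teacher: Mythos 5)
Your outline reproduces the paper's main moves --- the bisector as a fixed point of the CoS update, the observation that the adaptive budget $\varepsilon=\mathbb{E}\|\delta(\mathbf{x})\|$ clips only the larger of the two perturbations, and the reduction of a BAT step to a restricted-AT step with an effective budget --- but the step you yourself flag as the crux, matching the convergence behavior of ``restricted AT with best budget,'' is precisely what you leave open, and your geometric framework cannot deliver it because you never give a concrete characterization of what the best budget \emph{is}. The paper closes this step with the tool you are missing: the explicit gradient and second-derivative computations of the squared sigmoid loss inherited from the proofs of Propositions 1 and 2. There, ``acceleration'' is made concrete as the curvature $\partial^2\mathcal{L}_{\rm AT}/(\partial b)^2$ at the minimum, whose maximizer over the budget $\eta$ lies strictly inside the margin ($\eta$ slightly below $1$ in normalized coordinates), and the failure mode of unrestricted AT is made concrete as the vanishing of the AE term proportional to $[2\sigma(\cdot)-1]\,\sigma'(\cdot)$ in $\partial\mathcal{L}/\partial b$ when an AE sits exactly on the current boundary. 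Given these two facts the paper's argument is short: cutoff with $\varepsilon=\frac{\eta_1+\eta_2}{2}$ clips the overshooting AE back inside the boundary and restores a nonvanishing gradient (restricted-AT behavior without a prescribed budget); scale with $\rho<1$ moves boundary AEs to the curvature-maximizing position (restricted AT \emph{with best} budget); and at the ideal model the CoS operation leaves optimal DeepFool AEs essentially unchanged, so the fixed point and the local rate coincide with those of restricted AT at the best budget.

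A second, related mismatch is that you set yourself a strictly harder goal than the paper proves. Your plan demands a global transient analysis --- monotone decrease of $|d_1-d_2|$ and of the angle of $W$, no overshoot past the bisector, contraction factors for a coupled piecewise-linear dynamics --- none of which appears in the paper, and which is genuinely delicate for exactly the reasons you list. The paper's ``same convergence property'' is substantiated only locally, at the fixed point, via the curvature there. Note also that your rotation mechanism differs from the paper's: in the paper's setup the lemmas force all gradient-based AEs onto the $\hat{\mathbf{x}}_1$ axis, so the asymmetric AE pair cannot rotate $W$; the transverse components $W_i$, $i\ge 2$, are driven to zero purely by the regularization term $\lambda\,{\rm sign}(W_i)$, which is what collapses the dynamics to the one-dimensional $(W_1,b)$ system your ``coupled'' analysis struggles with. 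To complete your proof along the paper's lines, drop the global contraction program, characterize the best budget explicitly as the curvature maximizer at the symmetric optimum, and verify that the effective budget $\rho\min(d_i,\varepsilon)$ evaluated at the fixed point lands at (or arbitrarily near) that maximizer.
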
 

Therefore, BAT not only avoids the difficulty of making the best choice of the budget, but also provides an AT approach that can dynamically adjust a nonuniform budget, seek to provide a path to potentially guess the perfect decision boundary, and finally reach the model with best robustness. Therefore, we name this approach blind adversarial training.

\begin{algorithm}[tb]
  \caption{Blind Adversarial Training (BAT)}
  \label{alg:bat}
  \begin{algorithmic}
    \REQUIRE {Dataset $\{\mathbf{x},\mathbf{y}\}$ and hyper-parameters~(scale factor $\rho$, learning rate $\alpha$).}
    \ENSURE {Model $\theta$.}
    \STATE {Initialize model $\theta$.}
    \REPEAT
    \STATE {$\mathcal{L}_{\rm C} = \mathcal{L}(\theta,\mathbf{x},\mathbf{y})$,} \COMMENT{Loss on clean data}
    \STATE {$\delta(\mathbf{x}) = \mathbf{x}_{\rm adv} - \mathbf{x}$,} \COMMENT{$\mathbf{x}_{\rm adv}$: DeepFool AEs}
    \STATE {$\varepsilon = \mathbb{E}_{\mathbf{x}}\,||\delta(\mathbf{x})||_2$,} \COMMENT{Adaptive Cutoff budget}
    \STATE {$\delta_{\rm Co}(\mathbf{x}) = \text{cut}\{\delta(\mathbf{x}), \varepsilon\}$,} \COMMENT{Cutoff AEs}
    \STATE {$\delta_{\rm CoS}(\mathbf{x}) = \rho \delta_{\rm Co}(\mathbf{x})$,} \COMMENT{Scale AEs}
    \STATE {$\mathbf{x}_{\rm CoS} = \mathbf{x} + \delta_{\rm CoS}(\mathbf{x})$,} \COMMENT{Get CoS AEs}
    \STATE {$\mathcal{L}_{\rm AE} = \mathcal{L}(\theta,\mathbf{x}_{\rm CoS},\mathbf{y}) $,} \COMMENT{Loss on CoS AEs}
    \STATE {$\theta = \theta - \alpha (\nabla_{\theta} \mathcal{L}_{\rm C} + \nabla_{\theta} \mathcal{L}_{\rm AE})$,} 
        \COMMENT{Update with total loss}
    \UNTIL{$\text{avg-}\A\A(1)$ converge.} \COMMENT{comprehensive robustness}
  \end{algorithmic}
\end{algorithm}

\textbf{Adversarial Accuracy}. 
To evaluate the performance of AT, some approaches use adversarial perturbations or their average as the measurement~\cite{cw,7467366}; however, this makes AT sensitive to the AEs with a large norm. Other approaches instead use the accuracy under adversarial attacks with the given budget~\cite{2014arXiv1412.6572G,madry2017towards}. However, a fixed attack budget cannot generally demonstrate the full robustness of a model, since any model can be attacked in the absence of restrictions on the attack budget. 
Correspondingly, we propose a new evaluation criterion, named adversarial accuracy ($\A\A$), to monitor the training process. $\A\A$ is defined as the accuracy curve for white-box attacks with varying attack strength: 
\begin{equation}\label{eq:aa}
\A\A(\varepsilon) = \A(\mathbf{x} + \text{cut} \{\delta(\mathbf{x}), \varepsilon \}  )
\end{equation}
where $\A( \cdot )$ is the accuracy of the given data. 
$\A\A$ can be considered as an evaluation of comprehensive robustness.
To simplify the expression, \textbf{average adversarial accuracy ($\text{avg-}\A\A$)} is defined to show the overall robustness in an interval $[0,\Theta]$, 
\begin{equation}\label{eq:aaa}
\text{avg-}\A\A(\Theta)=\frac{1}{\Theta} \int_{0}^{\Theta} \A\A(\varepsilon) d \varepsilon.
\end{equation}
A larger $\text{avg-}\A\A$ means that the model shows more robustness under the attacks with maximum budget $\Theta$.
Clearly, clean data accuracy is a special case of $\A\A$ and $\text{avg-}\A\A$ with $0$ attack strength,
$\text{avg-}\A\A(0) = \lim_{\Theta \rightarrow 0} \frac{1}{\Theta} \int_{0}^{\Theta} \A\A(\varepsilon) d \varepsilon = \A\A(0) = \A(\mathbf{x})$.

\section{Experiments}
\label{sec:result}

In this section, we evaluate the BAT approach on various benchmark datasets, facing FGSM~\cite{2014arXiv1412.6572G}, Noise~\cite{goodfellow2019evaluation}, PGD~\cite{madry2017towards}, DeepFool~\cite{moosavi2016deepfool} and CW~\cite{cw} attacks. 
The code for these experiments is based on the open source library cleverhans~\cite{papernot2018cleverhans}.
We consider using BAT to train LeNet-$5$~\cite{MnistLeNet} for MNIST~\cite{MnistLeNet}, Fashion-MNIST~\cite{xiao2017/online} and SVHN~\cite{Netzer2012Reading}, and train FitNet-$4$~\cite{Romero2015} for CIFAR-10 and CIFAR-100~\cite{Krizhevsky2009}. We compare the BAT approach with several state-of-the-art training approaches, such as normal training (NT), FGSM-AT (AT with FGSM AEs), PGD-AT (AT with PGD AEs), and DF-AT (AT with DeepFool AEs). 

{\bf Experimental setup}. For all of the experiments, we normalize the pixels to $[0,1]$ by dividing by 255, use label smoothing regularization~\cite{7780677} to avoid over-fitting, and perform data augmentation (with a width/height shift range of $0.1$ and random flips) for the SVHN, CIFAR-10 and CIFAR-100 datasets to improve the clean data accuracy. For the inner AE generations for all of the datasets, see Table~\ref{Parameters_AE} for details. 
For the outer training process in AT, we use Adam optimization~\cite{Kingma2014Adam} and see Table~\ref{Parameters_AT} for details.
We set the scale parameter $\rho = 0.9$ in BAT for the final results. 

{\bf Compute Overhead}. As shown in Algorithm~\ref{alg:bat}, the main add-ons of BAT is the cutoff\&scale strategy, which is quite simple from the point of view of computation cost. 
Thus the extra expense (compute/memory overhead) of BAT than DF-AT is negligible.

\begin{table}[t]
\caption{Parameters of AEs}
\label{Parameters_AE}
\centering
{\begin{tabular}{|c|c|c|}
\hline
AEs       & Parameter & Value \\
\hline 
\multirow{2}*{FGSM} & Perturbation & Norm-constrained \\
                    & Types of norm & $\ell_\infty$ \\
                    \hline
\multirow{2}*{Noise} & Perturbation & Norm-constrained \\
                    & Types of norm & $\ell_\infty$ \\
                    \hline
\multirow{5}*{PGD}  & Perturbation & Norm-constrained \\
                    & Types of norm & $\ell_\infty$ \\
                    & Number of step & $20$ \\
                    & Perturbation per step & $\varepsilon/10$ \\
                    & Initial perturbation & $\varepsilon/2$ \\
                    \hline
\multirow{4}*{DeepFool}  & Perturbation & Unconstrained \\
                    & Types of norm & $\ell_2$ \\
                    & Number of step & $10$ \\
                    & Overshoot & $0.02$ \\
                    \hline
\multirow{6}*{CW}  & Perturbation & Unconstrained \\
                    & Types of norm & $\ell_2$ \\
                    & Number of step & $100$ with abort early\\
                    & Learning rate & $0.01$ \\
                    & Confidence & $0$ \\
                    & Binary search steps & $10$\\
\hline
\end{tabular}}
\end{table}

\begin{table}[t]
\caption{Parameters of AT}
\label{Parameters_AT}
\centering
\setlength{\tabcolsep}{2.0mm}{
\begin{tabular}{|c|c|c|}
\hline 
Dataset    & $N_\text{epochs}$  &  Learning rate  \\
\hline 
MNIST      & 50          & 1e-3               \\
Fashion-MNIST & 70       & 1e-3               \\
SVHN       & 300         & 1e-4               \\
CIFAR-10   & 60          & 1e-4               \\
CIFAR-100  & 100         & 1e-4            \\
\hline 
\end{tabular}}
\end{table}

\begin{figure*}[!t]
  \centering
  \includegraphics[width=0.9\textwidth]{}
  \caption{Comparison of the $\A\A$ of BAT with NT, DF-AT, FGSM-AT and PGD-AT approaches for various datasets (MNIST, Fashion-MNIST, SVHN, CIFAR10 and CIFAR100) and various white/black-box attacks (FGSM, Noise, PGD, DeepFool and CW attacks) with varying attack strengths. 
  The correspondence between the approaches (with the budget in brackets) and color/shape of the lines are shown in the above legend. }
  \label{fig:w-bat}
\end{figure*}

\begin{table*}[!t]

  \caption{Comparison of the accuracy on clean data/AEs with varying attack strengths and avg-$\A\A(\Theta)$ (overall robustness, Eq.~\eqref{eq:aaa}) of BAT with other 
  approaches (with the prescribed budget) for various datasets 
  and various $\ell_2$ white/black-box attacks, 
  The larger the average of AEs norm ($\mathbb{E}||\delta(\mathbf{x})||$, denoted by "Norm" in Table) corresponds to the robustness better. For black-box attacks, the mean value is that of NT. 
  We mark the highest result with red bold and the second highest result with blue underline. 
  }
\label{compare}
\vspace{0.5cm}
\centering
\footnotesize
\setlength{\tabcolsep}{2.0mm}{
\begin{tabular}{|c|c|c|c|cc|cccc|cccc|}
\hline
 &
 & Defense     & Clean   & \multicolumn{2}{c|}{avg-$\A\A(\Theta)$}
           & \multicolumn{4}{c|}{DeepFool ($\ell_2$)}
           & \multicolumn{4}{c|}{CW ($\ell_2$)}\\
\hline 
\multirow{30}*{\rotatebox{90}{White-box}} & \multirow{6}*{\rotatebox{90}{Mnist}} & Budget       & $-$      & 0.13 & 0.07
             &0.05&0.1&0.2& Norm 
             &0.03&0.05&0.1&Norm\\
\cline{3-14}
& &NT &\textbf{\textcolor{red}{99.3}}   & 28.7& 49.8
    &4.2 &0.6 &0.6 &0.035 
    &58.4 &0.7 &0.6 &0.03 \\
& &FGSM(0.3)&\textbf{\textcolor{red}{99.3}}   & 26.0& 45.0
    &0.8 &0.6 &0.6 &0.031 
    &14.2 &0.6 &0.6 &0.023 \\
& &PGD(0.3) &\underline{\textcolor{blue}{98.7}}   & \textbf{\textcolor{red}{92.2}}& \textbf{\textcolor{red}{96.3}}
    &\textbf{\textcolor{red}{95.4}} &\textbf{\textcolor{red}{88.9}} &\textbf{\textcolor{red}{40.1}} &\textbf{\textcolor{red}{0.182}} 
    &94.8 &83.7 &29.1 &\underline{\textcolor{blue}{0.083}} \\
& &DF-AT &97.9   & \underline{\textcolor{blue}{84.8}} & 94.2
    &92.8 &\underline{\textcolor{blue}{75.3}} &\underline{\textcolor{blue}{23.6}} &\underline{\textcolor{blue}{0.147}} 
    &\underline{\textcolor{blue}{95.3}} &\underline{\textcolor{blue}{87.2}} &\textbf{\textcolor{red}{35.6}} &\textbf{\textcolor{red}{0.087}} \\
& &BAT &98.5   & \underline{\textcolor{blue}{84.8}}& \underline{\textcolor{blue}{94.8}}
    &\underline{\textcolor{blue}{93.4}} &{{}{75.0}} &10.7 &0.136 
    &\textbf{\textcolor{red}{96.3}} &\textbf{\textcolor{red}{88.4}} &\underline{\textcolor{blue}{29.6}} &0.08 \\
\cline{2-14}

& \multirow{6}*{\rotatebox{90}{Fashion}} & Budget       & $-$        & 0.08& 0.04
             &0.05&0.1&0.2&Norm
             &0.03&0.05&0.1&Norm\\
\cline{3-14}
& &NT &\underline{\textcolor{blue}{92.3}}    & 22.4 & 35.6		
    &5.7 &5.7 &5.7 &0.011 
    &5.8 &5.8 &5.8 &0.01  \\
& &FGSM(0.1)&\textbf{\textcolor{red}{92.5}}   & 17.2  & 26.5		
    &5.6 &5.6 &5.6&0.007  
    &5.5 &5.5 &5.5 &0.005  \\
& &PGD(0.1) &85.7   & 69.3  & \underline{\textcolor{blue}{79.6}}	
    &\underline{\textcolor{blue}{66.6}} &29.9 &\underline{\textcolor{blue}{11.0}} &0.065 
    &\underline{\textcolor{blue}{72.5}} &55.7 &19.7 &\underline{\textcolor{blue}{0.051}}  \\
& &DF-AT &87.5   & \underline{\textcolor{blue}{69.4}}& 77.8	
    &65.1 &\underline{\textcolor{blue}{43.8}} &\textbf{\textcolor{red}{18.6}} &\underline{\textcolor{blue}{0.07}} 
    &70.7 &\underline{\textcolor{blue}{57.9}} &\textbf{\textcolor{red}{31.3}} &0.046  \\
& &BAT &88.8   & \textbf{\textcolor{red}{73.9}}& \textbf{\textcolor{red}{83.2}}
    &\textbf{\textcolor{red}{70.3}} &\textbf{\textcolor{red}{44.2}} &9.1 &\textbf{\textcolor{red}{0.081}} 
    &\textbf{\textcolor{red}{77.5}} &\textbf{\textcolor{red}{60.7}} &\underline{\textcolor{blue}{21.9}} &\textbf{\textcolor{red}{0.057}}  \\
\cline{2-14}

& \multirow{6}*{\rotatebox{90}{SVHN}} & Budget       & $-$      & 0.004 &  0.002
             &0.001&0.005&0.01&Norm
             &0.001&0.005&0.01&Norm\\
\cline{3-14}
& &NT &\textbf{\textcolor{red}{88.9}}   & 44.6 &65.4
    &69.0 &6.9 &6.3 &0.002 
    &70.7 &6.6 &6.4 &0.001 \\
& &FGSM(0.024) &\underline{\textcolor{blue}{88.3}}   & 43.0  &63.5
    &66.9 &7.1 &6.6 &0.002 
    &69.3 &6.7 &6.6 &0.001 \\
& &PGD(0.024) &82.0   & 75.8 & 79.8 
    &80.4 &\textbf{\textcolor{red}{62.6}} &\underline{\textcolor{blue}{37.0}} &\underline{\textcolor{blue}{0.008}} 
    &80.4 &\textbf{\textcolor{red}{57.8}} &\underline{\textcolor{blue}{27.6}} &\underline{\textcolor{blue}{0.006}} \\
& &DF-AT &85.5   & \textbf{\textcolor{red}{77.7}} & \underline{\textcolor{blue}{82.5}}	
    &\underline{\textcolor{blue}{83.4}} &\underline{\textcolor{blue}{61.7}} &\textbf{\textcolor{red}{38.4}} &\textbf{\textcolor{red}{0.009}} 
    &\underline{\textcolor{blue}{83.3}} &\underline{\textcolor{blue}{57.6}} &\textbf{\textcolor{red}{31.3}} &\textbf{\textcolor{red}{0.007}} \\
& &BAT &\underline{\textcolor{blue}{88.3}}   & \underline{\textcolor{blue}{76.2}} & \textbf{\textcolor{red}{83.8}}
    &\textbf{\textcolor{red}{85.0}} &50.7 &17.8 &0.005 
    &\textbf{\textcolor{red}{84.8}} &43.6 &11.4 &0.004 \\
\cline{2-14}

& \multirow{6}*{\rotatebox{90}{Cifar10}} & Budget       & $-$       &0.03  & 0.02  
             &0.003&0.01&0.03&Norm
             &0.003&0.01&0.03&Norm\\
\cline{3-14}
& &NT &\textbf{\textcolor{red}{92.0}}   & 58.9 & 67.5		
    &77.2 &64.0 &33.2 &0.029 
    &60.0 &33.2 &20.7 &0.0055 \\
& &FGSM(0.024) &84.9  &54.3  &63.2		
    &81.2 &63.2 &28.0 &0.018 
    &80.9 &56.0 &13.5 &0.012 \\
& &PGD(0.024) &84.0  & 57.3  & 65.2	
    &\underline{\textcolor{blue}{81.5}} &65.5 &33.7 &0.022 
    &\underline{\textcolor{blue}{81.3}} &59.3 &14.7 &0.013 \\
& &DF-AT &81.3   & \textbf{\textcolor{red}{62.9}}& \underline{\textcolor{blue}{68.3}} 
    &78.9 &\underline{\textcolor{blue}{68.3}} &\textbf{\textcolor{red}{46.8}} &\textbf{\textcolor{red}{0.037}} 
    &78.8 &\textbf{\textcolor{red}{63.5}} &\textbf{\textcolor{red}{26.8}} &\textbf{\textcolor{red}{0.016}} \\
& &BAT &\underline{\textcolor{blue}{85.8}}   & \underline{\textcolor{blue}{62.3}}& \textbf{\textcolor{red}{68.8}} 
    &\textbf{\textcolor{red}{82.0}} &\textbf{\textcolor{red}{68.8}} &\underline{\textcolor{blue}{42.8}} &\underline{\textcolor{blue}{0.032}} 
    &\textbf{\textcolor{red}{81.8}} &\underline{\textcolor{blue}{62.0}} &\underline{\textcolor{blue}{20.3}} &\underline{\textcolor{blue}{0.015}} \\
\cline{2-14}

& \multirow{6}*{\rotatebox{90}{Cifar100}} & Budget       & $-$        & 0.007 & 0.003 
             &0.003&0.01&0.03&Norm
             &0.003&0.01&0.03&Norm\\
\cline{3-14}
& &NT &\textbf{\textcolor{red}{66.9}}   & 34.6 & 49.6\
    &35.4 &17.1 &11.3 &0.003 
    &20.4 &11.9 &11.2 &0.002 \\
& &FGSM(0.024) &58.2   &51.2  & 55.9\
    &53.0 &36.3 &17.0 &0.008 
    &52.4 &32.1 &\underline{\textcolor{blue}{13.3}} &0.006 \\
& &PGD(0.024) &57.5   &51.4  & 55.6\
    &52.9 &\underline{\textcolor{blue}{37.4}} &\underline{\textcolor{blue}{17.9}} &\underline{\textcolor{blue}{0.009}} 
    &52.2  &\underline{\textcolor{blue}{33.4}} &13.2 &\underline{\textcolor{blue}{0.007}} \\
& &DF-AT &59.4   & \underline{\textcolor{blue}{52.2}}& \underline{\textcolor{blue}{57.0}}
    &\underline{\textcolor{blue}{54.1}} &\textbf{\textcolor{red}{38.9}} &\textbf{\textcolor{red}{21.5}} &\textbf{\textcolor{red}{0.012}} 
    &\underline{\textcolor{blue}{53.2}} &\textbf{\textcolor{red}{35.2}} &\textbf{\textcolor{red}{16.3}} &\textbf{\textcolor{red}{0.008}} \\
& &BAT &\underline{\textcolor{blue}{63.7}}   &\textbf{\textcolor{red}{52.8}} & \textbf{\textcolor{red}{59.7}}
    &\textbf{\textcolor{red}{54.5}} &32.5 &14.5 &0.007 
    &\textbf{\textcolor{red}{53.4}} &25.1 &12.1 &0.005 \\
\hline 
\multirow{25}*{\rotatebox{90}{Black-box}} &
\multirow{5}*{\rotatebox{90}{Mnist}} & Budget     &$-$    & &
             &0.01&0.03&0.05&Norm
             &0.01&0.03&0.05&Norm \\
\cline{3-14}
& &FGSM(0.3) &\textbf{\textcolor{red}{99.3}}   & &
    &\textbf{\textcolor{red}{99.2}} &\textbf{\textcolor{red}{99.1}} &\textbf{\textcolor{red}{99.1}} & 0.035
    &\textbf{\textcolor{red}{99.2}} &\textbf{\textcolor{red}{99.1}} &\textbf{\textcolor{red}{99.1}} &0.03\\
& &PGD(0.3) &\underline{\textcolor{blue}{98.7}}   & &
    &\underline{\textcolor{blue}{98.6}} &\underline{\textcolor{blue}{98.5}} &\underline{\textcolor{blue}{98.5}} & 0.035
    &\underline{\textcolor{blue}{98.6}} &\underline{\textcolor{blue}{98.5}} &\underline{\textcolor{blue}{98.5}}  & 0.03\\
& &DF-AT &97.9   & &
    &97.8 &97.7 &97.7 &0.035
    &97.8 &97.8 &97.8 & 0.03\\
& &BAT &98.5   & &
    &98.3 &98.2 &98.2 &0.035
    &98.3 &98.3 &98.3 &0.03\\
\cline{2-14}

& \multirow{5}*{\rotatebox{90}{Fashion}} & Budget    &$-$     & &
             &0.01&0.03&0.05 &Norm
             &0.01&0.03&0.05 &Norm\\
\cline{3-14}
& &FGSM(0.1) &\textbf{\textcolor{red}{92.5}}   & &
    &\textbf{\textcolor{red}{90.5}} &\textbf{\textcolor{red}{90.0}} &\textbf{\textcolor{red}{90.0}}  & 0.011
    &\textbf{\textcolor{red}{90.2}} &\textbf{\textcolor{red}{89.9}} &\textbf{\textcolor{red}{89.9}}   & 0.01 \\
& &PGD(0.1)  &85.7   & &
    &85.4 &85.4 &85.4   & 0.011
    &85.4 &85.4 &85.4   &  0.01\\
& &DF-AT  &87.5   & &
    &86.5 &86.4 &86.4  & 0.011
    &86.5 &86.4 &86.4   & 0.01 \\
& &BAT &\underline{\textcolor{blue}{88.8}}   & &
    &\underline{\textcolor{blue}{88.5}} &\underline{\textcolor{blue}{88.5}} &\underline{\textcolor{blue}{88.5}}  & 0.011
    &\underline{\textcolor{blue}{88.5}} &\underline{\textcolor{blue}{88.5}} &\underline{\textcolor{blue}{88.5}}   & 0.01 \\
\cline{2-14}

& \multirow{5}*{\rotatebox{90}{SVHN}} & Budget     &$-$    & &  
             &0.001&0.002&0.003 &Norm 
             &0.001&0.002&0.003 &Norm \\
\cline{3-14}
& &FGSM(0.024)  &\textbf{\textcolor{red}{88.3}}   & &
    &\underline{\textcolor{blue}{87.3}} &\underline{\textcolor{blue}{87.3}} &\underline{\textcolor{blue}{87.3}}  & 0.002
    &\underline{\textcolor{blue}{87.3}} &\underline{\textcolor{blue}{87.3}} &\underline{\textcolor{blue}{87.3}}  & 0.001 \\
& &PGD(0.024)  &82.0   & &
    &81.5 &81.5 &81.5  & 0.002
    &81.4 &81.4 &81.4  & 0.001 \\
& &DF-AT  &\underline{\textcolor{blue}{85.5}}   & &
    &85.1 &85.1 &85.1   & 0.002
    &85.1 &85.1 &85.1   & 0.001\\
& &BAT &\textbf{\textcolor{red}{88.3}}   & &
    &\textbf{\textcolor{red}{88.0}} &\textbf{\textcolor{red}{88.0}} &\textbf{\textcolor{red}{88.0}}  &  0.002
    &\textbf{\textcolor{red}{88.0}} &\textbf{\textcolor{red}{88.0}} &\textbf{\textcolor{red}{88.0}}  &  0.001\\
\cline{2-14}

& \multirow{5}*{\rotatebox{90}{Cifar10}} & Budget     &$-$      & &
             &0.001&0.002&0.003 &Norm 
             &0.001&0.002&0.003 &Norm \\
\cline{3-14}
& &FGSM(0.024)  &\underline{\textcolor{blue}{84.9}}   & &
    &\textbf{\textcolor{red}{84.5}} &\underline{\textcolor{blue}{84.2}} &\underline{\textcolor{blue}{84.0}}  & 0.029
    &\textbf{\textcolor{red}{84.8}} &\textbf{\textcolor{red}{84.8}} &\textbf{\textcolor{red}{84.8}}  & 0.0055 \\
& &PGD(0.024)  &84.0   & &
    &83.7 &83.5 &83.3  & 0.029
    &\underline{\textcolor{blue}{83.9}} &\underline{\textcolor{blue}{83.9}} &\underline{\textcolor{blue}{83.9}}  & 0.0055 \\
& &DF-AT  &81.3    & &
    &80.9 &{80.8} &{80.5}  &  0.029
    &81.1 &{81.1} &81.1   & 0.0055\\
& &BAT &\textbf{\textcolor{red}{85.8}}    & &
    &\textbf{\textcolor{red}{84.5}} &\textbf{\textcolor{red}{84.3}} &\textbf{\textcolor{red}{84.1}}  & 0.029
    &\textbf{\textcolor{red}{84.8}} &\textbf{\textcolor{red}{84.8}} &\textbf{\textcolor{red}{84.8}}   & 0.0055\\
\cline{2-14}

& \multirow{5}*{\rotatebox{90}{Cifar100}} & Budget      &$-$     & &
             &0.001&0.002&0.003 &Norm 
             &0.001&0.002&0.003 &Norm \\
\cline{3-14}
& &FGSM(0.024)  &58.2   & &
    &58.0 &58.0 &58.0  & 0.003
    &58.1 &58.1 &58.1   & 0.002\\
& &PGD(0.024)  &57.5   & &
    &57.4 &57.4 &57.4  & 0.003
    &57.4 &57.4 &57.4   & 0.002\\
& &DF-AT  &\underline{\textcolor{blue}{59.4}}   & &
    &\underline{\textcolor{blue}{59.3}} &\underline{\textcolor{blue}{59.3}} &\underline{\textcolor{blue}{59.3}}  & 0.003
    &\underline{\textcolor{blue}{59.3}} &\underline{\textcolor{blue}{59.3}} &\underline{\textcolor{blue}{59.3}}  &  0.002\\
& &BAT &\textbf{\textcolor{red}{63.7}}   & &
    &\textbf{\textcolor{red}{63.5}} &\textbf{\textcolor{red}{63.5}} &\textbf{\textcolor{red}{63.5}}  & 0.003
    &\textbf{\textcolor{red}{63.5}} &\textbf{\textcolor{red}{63.5}} &\textbf{\textcolor{red}{63.5}}   & 0.002\\
\hline
\end{tabular}
}
\end{table*}

\begin{table*}[!t]
  \caption{Comparison of the accuracy on clean data and AEs with varying attack strengths (small, medium and large) of BAT with other 
  approaches (with the prescribed budget in brackets) for various datasets 
  and various $\ell_\infty$ white/black-box attacks, 
  the Noise attack in the white-box is also a black-box attack. 
  We mark the highest result with red bold and the second highest result with blue underline.
  Notice that BAT has zero knowledge about the attack budgets, 
  even so it still performs well in most cases.}
\label{compare1}
\vspace{0.5cm}
\centering
\footnotesize
\setlength{\tabcolsep}{2.0mm}{
\begin{tabular}{|c|c|c|c|ccc|ccc|ccc|}
\hline
 &
 & Defense     & Clean   & \multicolumn{3}{c|}{FGSM ($\ell_\infty$)}
           & \multicolumn{3}{c|}{Noise ($\ell_\infty$)}
           & \multicolumn{3}{c|}{PGD ($\ell_\infty$)}\\
\hline 
\multirow{30}*{\rotatebox{90}{White-box}} & \multirow{6}*{\rotatebox{90}{Mnist}} & Budget       & $-$      &0.1&0.3&0.5
             &0.1&0.3&0.5
             &0.1&0.2&0.3\\
\cline{3-13}
& &NT &\textbf{\textcolor{red}{99.3}} &84.4 &12.8 &6.3 
    &\textbf{\textcolor{red}{99.3}} &87.7 &31.7 
    &23.8 &0.5 &0.5 \\
& &FGSM(0.3)&\textbf{\textcolor{red}{99.3}} &\underline{\textcolor{blue}{97.0}} &\textbf{\textcolor{red}{99.3}} &\textbf{\textcolor{red}{73.8}} 
    &99.1 &94.8 &27.4 
    &15.9 &0.6 &0.6 \\
& &PGD(0.3) &\underline{\textcolor{blue}{98.7}} &\textbf{\textcolor{red}{98.3}} &\underline{\textcolor{blue}{97.3}} &42.3 
    &\underline{\textcolor{blue}{98.7}} &\textbf{\textcolor{red}{98.5}} &51.9 
    &\textbf{\textcolor{red}{98.0}} &\textbf{\textcolor{red}{96.6}} &\textbf{\textcolor{red}{93.8}}  \\
& &DF-AT &97.9 &95.9 &79.3 &40.9 
    &97.7 &95.3 &\underline{\textcolor{blue}{73.6}} 
    &95.1 &\underline{\textcolor{blue}{77.4}} &\underline{\textcolor{blue}{	22.7}}	 \\
& &BAT &98.5 &96.7 &82.2 &\underline{\textcolor{blue}{45.8}} 
    &98.3 &\underline{\textcolor{blue}{97.3}} &\textbf{\textcolor{red}{78.9}} 
    &\underline{\textcolor{blue}{95.6}} &72.6 &5.8  \\
\cline{2-13}

& \multirow{6}*{\rotatebox{90}{Fashion}} & Budget       & $-$      &0.05&0.1&0.2
             &0.05&0.1&0.2
             &0.05&0.1&0.2\\
\cline{3-13}
& &NT &\underline{\textcolor{blue}{92.3}} &21.6 &6.7 &4.6 
    &\textbf{\textcolor{red}{90.0}} &78.5 &35.0 
    &5.1 &4.5 &3.3 \\
& &FGSM(0.1)&\textbf{\textcolor{red}{92.5}} &\textbf{\textcolor{red}{83.9}} &\textbf{\textcolor{red}{91.5}} &\underline{\textcolor{blue}{59.0}} 
    &87.5 &77.4 &34.3 
    &5.1 &4.8 &3.8 \\
& &PGD(0.1) &85.7 &82.9 &\underline{\textcolor{blue}{78.5}} &45.1 
    &85.4 &\underline{\textcolor{blue}{85.1}} &48.1 
    &\textbf{\textcolor{red}{81.7}} &\textbf{\textcolor{red}{74.6}} &\underline{\textcolor{blue}{7.5}}  \\
& &DF-AT &87.5 &77.7 &70.1 &57.4 
    &83.6 &81.5 &\underline{\textcolor{blue}{73.7}} 
    &72.9 &\underline{\textcolor{blue}{49.0}} &\textbf{\textcolor{red}{11.3}} \\
& &BAT &88.8 &\underline{\textcolor{blue}{83.1}} &75.0 &\textbf{\textcolor{red}{60.3}} 
    &\underline{\textcolor{blue}{88.4}} &\textbf{\textcolor{red}{87.4}} &\textbf{\textcolor{red}{79.4}} 
    &\underline{\textcolor{blue}{79.4}} &42.3 &5.8 \\
\cline{2-13}

& \multirow{6}*{\rotatebox{90}{SVHN}} & Budget       & $-$      &0.05&0.1&0.2
             &0.05&0.1&0.2
             &0.001&0.005&0.01\\
\cline{3-13}
& &NT &\textbf{\textcolor{red}{88.9}} &5.4 &5.6 &6.0 
    &69.8 &39.2 &18.2 
    &78.9 &12.5 &5.6 \\
& &FGSM(0.024) &\underline{\textcolor{blue}{88.3}} &\textbf{\textcolor{red}{68.3}} &\textbf{\textcolor{red}{47.5}} &\textbf{\textcolor{red}{36.8}} 
    &56.4 &29.8 &18.4 
    &77.4 &11.2 &5.7 \\
& &PGD(0.024) &82.0 &26.7 &11.6 &10.1 
    &\underline{\textcolor{blue}{79.2}} &\textbf{\textcolor{red}{73.5}} &\textbf{\textcolor{red}{52.8}} 
    &81.8 &\textbf{\textcolor{red}{78.1}} &\textbf{\textcolor{red}{69.5}} \\
& &DF-AT &85.5 &\underline{\textcolor{blue}{27.3}} &\underline{\textcolor{blue}{18.8}} &\underline{\textcolor{blue}{16.4}} 
    &63.9 &43.4 &27.7 
    &\underline{\textcolor{blue}{84.7}} &\underline{\textcolor{blue}{73.8}} &\underline{\textcolor{blue}{54.9}} \\
& &BAT &\underline{\textcolor{blue}{88.3}} &7.2 &5.1 &7.0 
    &\textbf{\textcolor{red}{84.5}} &\underline{\textcolor{blue}{72.0}} &\underline{\textcolor{blue}{43.2}} 
    &\textbf{\textcolor{red}{86.9}} &67.8 &35.5 \\
\cline{2-13}

& \multirow{6}*{\rotatebox{90}{Cifar10}} & Budget       & $-$      &0.05&0.1&0.2
             &0.05&0.1&0.2
             &0.003&0.01&0.03\\
\cline{3-13}
& &NT &\textbf{\textcolor{red}{92.0}} &\textbf{\textcolor{red}{56.0}} &35.0 &16.4 
    &80.0 &53.8 &21.6 
    &76.4 &58.7 &39.5 \\
& &FGSM(0.024) &84.9&45.4  &35.3 &28.3 
    &\underline{\textcolor{blue}{83.5}} &77.7 &\underline{\textcolor{blue}{58.1}} 
    &83.3 &\underline{\textcolor{blue}{74.1}} &41.9 \\
& &PGD(0.024) &84.0&45.9  &34.6 &26.5 
    &83.3 &\textbf{\textcolor{red}{79.6}} &\textbf{\textcolor{red}{61.3}} 
    &\textbf{\textcolor{red}{83.7}} &\textbf{\textcolor{red}{76.6}} &46.6 \\
& &DF-AT &81.3 &\underline{\textcolor{blue}{54.1}} &\textbf{\textcolor{red}{46.3}} &\textbf{\textcolor{red}{36.7}} 
    &79.9 &72.5 &51.1 
    &80.8 &72.2 &\textbf{\textcolor{red}{50.1}} \\
& &BAT &\underline{\textcolor{blue}{85.8}} &{53.7} &\underline{\textcolor{blue}{45.9}} &\underline{\textcolor{blue}{34.6}} 
    &\textbf{\textcolor{red}{85.0}} &\underline{\textcolor{blue}{79.1}} &57.2 
    &\underline{\textcolor{blue}{83.4}} &73.8 &\underline{\textcolor{blue}{46.8}} \\
\cline{2-13}

& \multirow{6}*{\rotatebox{90}{Cifar100}} & Budget       & $-$      &0.01&0.03&0.05
             &0.05&0.1&0.2
             &0.003&0.01&0.03\\
\cline{3-13}
& &NT &\textbf{\textcolor{red}{66.9}} &39.0 &29.3 &20.1 
    &43.6 &18.3 &6.1 
    &39.3 &22.5 &13.1 \\
& &FGSM(0.024) &58.2 &\underline{\textcolor{blue}{47.5}} &30.8 &23.1 
    &56.3 &\textbf{\textcolor{red}{47.8}} &\underline{\textcolor{blue}{19.2}} 
    &56.0 &\underline{\textcolor{blue}{44.5}} &21.3 \\
& &PGD(0.024) &57.5 &\textbf{\textcolor{red}{49.0}} &\textbf{\textcolor{red}{32.9}} &\underline{\textcolor{blue}{24.3}} 
    &56.2 &\underline{\textcolor{blue}{47.6}} &17.9 
    &56.0 &\textbf{\textcolor{red}{46.8}} &\textbf{\textcolor{red}{24.4}} \\
& &DF-AT &59.4 &47.1 &\underline{\textcolor{blue}{32.0}} &\textbf{\textcolor{red}{24.6}} 
    &\underline{\textcolor{blue}{57.4}} &\textbf{\textcolor{red}{47.8}} &\textbf{\textcolor{red}{23.6}} 
    &\underline{\textcolor{blue}{56.1}} &44.0 &\underline{\textcolor{blue}{23.4}} \\
& &BAT &\underline{\textcolor{blue}{63.7}} &46.0 &28.1 &22.0 
    &\textbf{\textcolor{red}{60.1}} &43.0 &14.7 
    &\textbf{\textcolor{red}{58.7}} &39.3 &17.1 \\
\hline 
\multirow{25}*{\rotatebox{90}{Black-box}} &
\multirow{5}*{\rotatebox{90}{Mnist}} & Budget     &$-$  &0.1&0.3&0.5
             & $-$& $-$&$-$
             &0.1&0.3&0.5\\
\cline{3-13}
& &FGSM(0.3) &\textbf{\textcolor{red}{99.3}} &\underline{\textcolor{blue}{98.3}} &88.0 &14.6 & & &
    &\underline{\textcolor{blue}{98.2}} &90.5 &16.8 \\
& &PGD(0.3) &\underline{\textcolor{blue}{98.7}} &\textbf{\textcolor{red}{98.4}} &\textbf{\textcolor{red}{97.3}} &26.9& & &
    &\textbf{\textcolor{red}{98.4}} &\textbf{\textcolor{red}{97.8}} &20.2 \\
& &DF-AT &97.9 &97.2 &90.0 &\textbf{\textcolor{red}{37.0}} & & &
    &97.3 &92.7 &\textbf{\textcolor{red}{64.3}} \\
& &BAT &98.5 &97.8 &\underline{\textcolor{blue}{92.7}} &\underline{\textcolor{blue}{35.2}} & & &
    &97.7 &\underline{\textcolor{blue}{94.0}} &\underline{\textcolor{blue}{53.7}} \\
\cline{2-13}

& \multirow{5}*{\rotatebox{90}{Fashion}} & Budget    &$-$   &0.05&0.1&0.2 &$-$ &$-$ &$-$
             &0.05&0.1&0.2\\
\cline{3-13}
& &FGSM(0.1) &\textbf{\textcolor{red}{92.5}} &79.1 &68.9 &25.5  & & &
    &67.5 &59.0 &17.5 \\
& &PGD(0.1)  &85.7 &\underline{\textcolor{blue}{84.3}} &\textbf{\textcolor{red}{82.9}} &37.3   & & &
    &\underline{\textcolor{blue}{84.3}} &\textbf{\textcolor{red}{83.4}} &45.7\\
& &DF-AT  &87.5 &82.0 &78.3 &\textbf{\textcolor{red}{64.8}}  & & &
    &82.3 &79.4 &\textbf{\textcolor{red}{72.2}}  \\
& &BAT &\underline{\textcolor{blue}{88.8}} &\textbf{\textcolor{red}{86.4}} &\underline{\textcolor{blue}{81.2}} &\underline{\textcolor{blue}{53.3}}  & & &
    &\textbf{\textcolor{red}{86.3}} &\underline{\textcolor{blue}{82.6}} &\underline{\textcolor{blue}{64.5}}  \\
\cline{2-13}

& \multirow{5}*{\rotatebox{90}{SVHN}} & Budget     &$-$    &0.001&0.005&0.01 &$-$ &$-$ &$-$
             &0.001&0.005&0.01 \\
\cline{3-13}
& &FGSM(0.024)  &\textbf{\textcolor{red}{88.3}} &79.1 &44.8 &21.6 & & &
    &76.3 &52.0 &22.7  \\
& &PGD(0.024)  &82.0 &79.2 &\textbf{\textcolor{red}{64.7}} &\textbf{\textcolor{red}{42.3}}  & & &
    &79.4 &\textbf{\textcolor{red}{66.0}} &\textbf{\textcolor{red}{49.6}}  \\
& &DF-AT  &\underline{\textcolor{blue}{85.5}} &\underline{\textcolor{blue}{79.8}} &\underline{\textcolor{blue}{48.2}} &\underline{\textcolor{blue}{32.3}}  & & &
    &\underline{\textcolor{blue}{80.2}} &\underline{\textcolor{blue}{55.2}} &\underline{\textcolor{blue}{39.9}} \\
& &BAT &\textbf{\textcolor{red}{88.3}} &\textbf{\textcolor{red}{82.8}} &46.5 &22.8  & & &
    &\textbf{\textcolor{red}{82.0}} &36.3 &14.1 \\
\cline{2-13}

& \multirow{5}*{\rotatebox{90}{Cifar10}} & Budget     &$-$    &0.05&0.1&0.2 &$-$ &$-$ &$-$
             &0.05&0.1&0.2 \\
\cline{3-13}
& &FGSM(0.024)  &\underline{\textcolor{blue}{84.9}} &\underline{\textcolor{blue}{80.9}}  &\underline{\textcolor{blue}{70.8}} &\underline{\textcolor{blue}{42.6}}  & & &
    &\textbf{\textcolor{red}{83.8}} &\underline{\textcolor{blue}{81.1}} &\underline{\textcolor{blue}{69.4}}  \\
& &PGD(0.024)  &84.0 &\textbf{\textcolor{red}{81.2}}  &\textbf{\textcolor{red}{72.4}} &\textbf{\textcolor{red}{42.8}}  & & &
    &\underline{\textcolor{blue}{83.1}} &\textbf{\textcolor{red}{81.5}} &\textbf{\textcolor{red}{72.0}}  \\
& &DF-AT  &81.3  &{77.4} &{62.9} &{37.4}  & & &
    &80.2 &77.3 &{62.8} \\
& &BAT &\textbf{\textcolor{red}{85.8}}  &{80.3} &{65.8} &{35.8}  & & &
    &{83.4} &80.3 &65.2 \\
\cline{2-13}

& \multirow{5}*{\rotatebox{90}{Cifar100}} & Budget      &$-$   &0.001&0.005&0.01 &$-$ &$-$ &$-$
             &0.001&0.005&0.01 \\
\cline{3-13}
& &FGSM(0.024)  &58.2 &\underline{\textcolor{blue}{57.6}} &\underline{\textcolor{blue}{53.0}} &\underline{\textcolor{blue}{33.9}}  & & &
    &57.7 &56.7 &53.2 \\
& &PGD(0.024)  &57.5 &{57.3} &\textbf{\textcolor{red}{53.2}} &\textbf{\textcolor{red}{34.2}}  & & &
    &57.2 &{56.6} &\underline{\textcolor{blue}{53.5}} \\
& &DF-AT  &\underline{\textcolor{blue}{59.4}} &57.3 &\textbf{\textcolor{red}{53.2}} &\textbf{\textcolor{red}{34.2}}  & & &
    &\underline{\textcolor{blue}{59.0}} &\underline{\textcolor{blue}{57.4}} &\textbf{\textcolor{red}{53.9}} \\
& &BAT &\textbf{\textcolor{red}{63.7}} &\textbf{\textcolor{red}{62.7}} &50.1 &23.6  & & &
    &\textbf{\textcolor{red}{63.0}} &\textbf{\textcolor{red}{61.0}} &52.8 \\
\hline
\end{tabular}
}
\end{table*}  

\subsection{Overall Results}

\textbf{White-box attack} results for BAT are shown in Figure~\ref{fig:w-bat} and Table~\ref{compare} (facing $\ell_2$ attacks) \& Table~\ref{compare1} (facing $\ell_\infty$ attacks), and are compared with the results obtained by the NT, DF-AT, FGSM-AT and PGD-AT approaches in each subfigure. 
We choose the most widely used prescribed budgets for FGSM-AT/PGD-AT in the literature~\cite{2014arXiv1412.6572G,madry2017towards,song2018improving}, corresponding to the model with a relatively good robustness.
For all of these results, the BAT approach achieved good comprehensive robustness and our results generally among the highest-quality results and are shown as the black solid lines in Figure~\ref{fig:w-bat}. 
Comparison of the robustness under different AT approaches and attack methods shows that the use of the same AEs in both generation and evaluation often shows good performance, such as for evaluating FGSM-AT by the FGSM attack, PGD-AT by PGD and DF-AT by DeepFool.
By contrast, using different kinds of AEs in the generation and evaluation may show relatively poor results. 
As we expected, BAT can achieve a comprehensive robust model, see Table~\ref{compare} for the comparison of \textbf{average adversarial accuracy} of DeepFool as Equation~\eqref{eq:aaa}. 
We set $\Theta = \mathbb{E}||\delta(\mathbf{x})||$ and the half of it, due to application issues are more concerned with the robustness facing AEs do not exceed decision boundaries.
It is clear that the avg-$\A\A(\Theta)$ of BAT are \textbf{in the top two of all datasets and ranking first in most cases}, show that our method can balance the clean accuracy and the comprehensive robustness of the model.
DeepFool-AT is good at defending against large attacks, so the score for large $\Theta$ is slightly higher than BAT in some cases. 

Next, we will provide a specific analysis for each AT approach. 
The FGSM-AT approach is almost entirely unable to defend against attacks other than the FGSM attack. 
The PGD-AT approach is slightly better for the FGSM/PGD attack but is not good enough when confronted with the Noise, DeepFool and CW attacks.
The DF-AT approach can achieve models with good robustness, particularly when facing DeepFool and CW attacks with large attack strengths, but the clean data accuracy is greatly reduced, and the $\A\A$ is lower under small attacks. 
For all of the datasets, the NT approach always attains the highest clean data accuracy, while the BAT approach is second only to NT and achieves a fairly high clean data accuracy compared with the other AT approaches, as can be clearly observed from an examination of the data presented in Table~\ref{compare}. 
Accompanied by the high clean data accuracy, BAT returns high-level $\A\A$ with a small attack strength, as observed from the data shown in red bold or blue underline in Table~\ref{compare}, along with a slight loss on $\A\A$ when facing a large attack strength compared to that of DF-AT. The FGSM, Noise and PGD used in the results are based on the $\ell_\infty$ norm, BAT/DF-AT has zero knowledge about the attack budgets, thus the comparison with restricted AT (FGSM/PGD-AT) is slightly unfair for BAT/DF-AT. It is observed that even though BAT is based on the $\ell_2$ norm, it still shows a fairly high robustness on the $\ell_\infty$ norm in most cases.

\textbf{Black-box attack} results for BAT are also shown in Figure~\ref{fig:w-bat} and Table~\ref{compare} (facing $\ell_2$ attacks) \& Table~\ref{compare1} (facing $\ell_\infty$ attacks). 
For fairness and realism, we use NT models to generate black-box AEs for attacking other approaches. Here, the $\A\A$ is computed under black-box AEs with varying attack strengths. 
In fact, the Noise attack in the white-box attack is also a black-box attack. 
Overall, it is clear that the $\A\A$ of the black-box AEs should be greater than that of the white-box AEs.
The results show that the BAT approach can provide a better defense against black-box attacks in most cases, as is particularly pronounced for the FGSM and PGD attacks.
For the DeepFool and CW attacks, most AT approaches can achieve high-level $\A\A$. Because the NT models are too vulnerable to these two attacks, these AEs from NT models always have norms that are too small and can be easily defended by the AT models.

\subsection{Influence of Cutoff and Scale Strategies}

\begin{figure*}[t]
  \centering
  \includegraphics[width=0.8\textwidth]{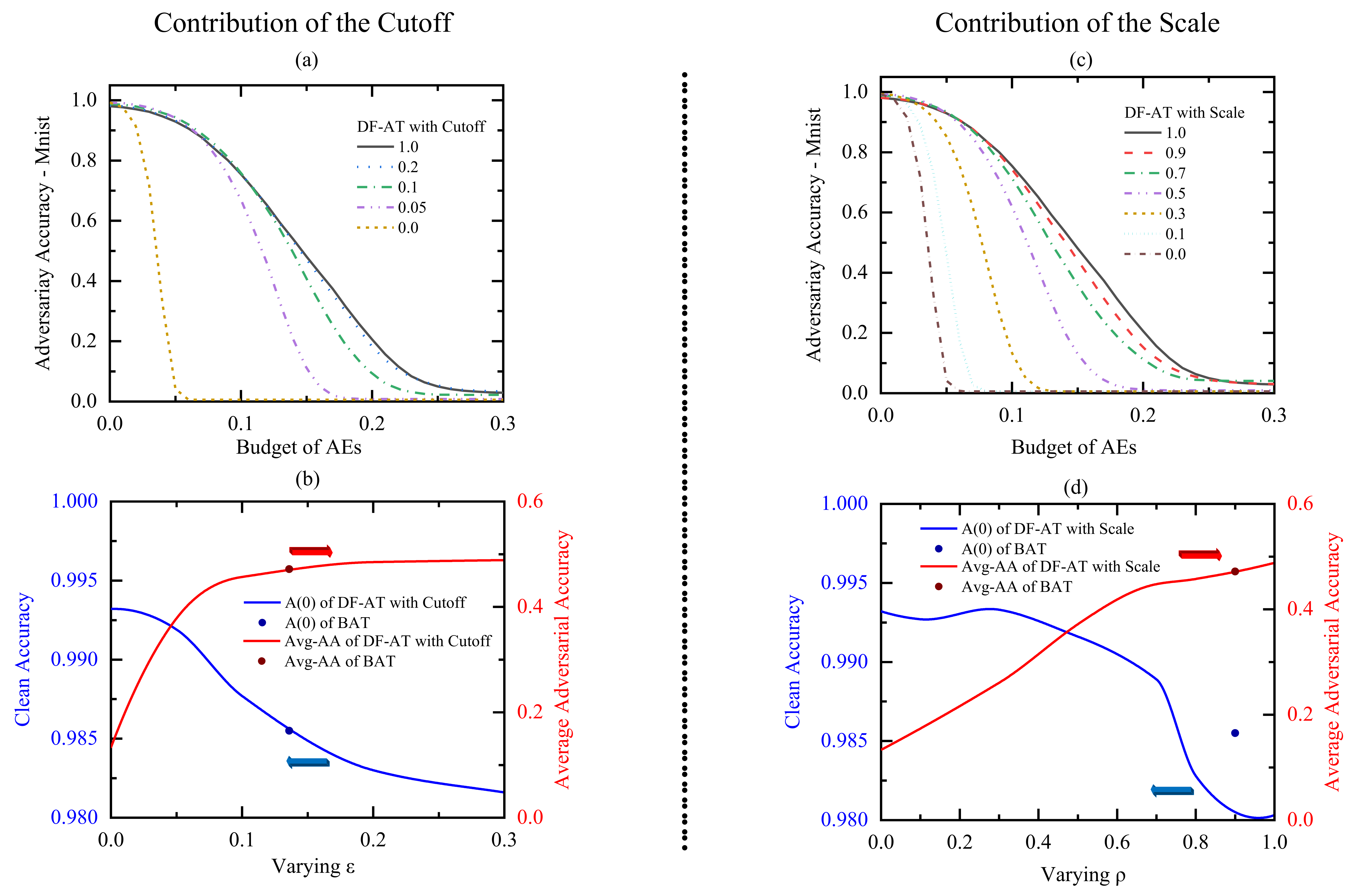}
  \caption{Influence of the cutoff~(varying $\varepsilon$) and scale~(varying $\rho$) strategies. 
  Subfigures (a-d) show the plots of $\A\A$, clean data accuracy and avg-$\A\A(0.3)$, respectively, obtained by sequentially varying $\varepsilon$ and $\rho$.
   The solid dots in subfigures (b,d) show the clean accuracy and avg-$\A\A(0.3)$ of BAT with the budget adaptively estimated using $\varepsilon$ and $\rho$, respectively.}
  \label{fig:df_cos}
\end{figure*}

We test the influence of the cutoff and scale on MNIST. The results for the influence of the cutoff are shown in Figure~\ref{fig:df_cos} (a-b), with $\rho = 1$ such that it is ineffective in terms of the scale part.
 It is clear that with the stricter cutoff parameter $\varepsilon$, the $\A\A$ with the attack strength larger than $\varepsilon$ decreases, but the $\A\A$ with a smaller attack strength do not change significantly. To show the trend of decreasing $\varepsilon$ more clearly, we plot the clean data accuracy $\A(0)$ and avg-$\A\A(0.3)$ 
in Figure~\ref{fig:df_cos} (b). We choose $\Theta=0.3$ to elucidate the influence of the $\varepsilon$ variation on the comprehensive robustness, since the $\A\A$ is almost $0$ if the attack strength $>0.3$.
The results clearly support our intuition that as $\varepsilon$ is adjusted from $1$ to $0$, avg-$\A\A(0.3)$ decreases and $\A(0)$ increases. 
Based on this, we also plot the $\A(0)$ and avg-$\A\A(0.3)$ of BAT (solid dots) in Figure~\ref{fig:df_cos} (b). 
Surprisingly, the results show that our BAT approach can adaptively estimate a good budget and obtain the model with both high clean data accuracy and comprehensive robustness. If the budget is adaptively estimated to be slightly larger, the avg-$\A\A(0.3)$ does not grow strongly but $\A(0)$ sharply declines; however, if the budget is adaptively estimated to be slightly smaller, the avg-$\A\A(0.3)$ will decrease sharply.

The results for the influence of the scale are shown in Figure~\ref{fig:df_cos} (c-d), where we set $\varepsilon = 1$ such that it is ineffective in terms of the cutoff part. 
The results show a trend similar to the trend observed for the cutoff, but as $\rho$ decreases, the $\A\A$ of the whole area declines proportionately, which is consistent with the scale principle. As shown in Figure~\ref{fig:df_cos} (d), as $\rho$ decreases, the avg-$\A\A(0.3)$ shows a relatively uniform downward trend, and $\A(0)$ is more sensitive for a large $\rho$. This feature encourages us to choose $\rho=0.9$ in this work, and the results for $\A(0)$ and avg-$\A\A(0.3)$ of BAT (solid dots) shown in Figure~\ref{fig:df_cos} (d) also support this choice.

In fact, from another point of view, we may also consider BAT in \eqref{eq:opt_bat} as a comprehensive approach of NT (the training without using AEs) and DF-AT in \eqref{eq:opt_df_at}. We indicate that the BAT approach with cutoff $\varepsilon= 1$ and scale $\rho = 1$ reduces to the DF-AT approach because there is no change to the AEs from DeepFool. Moreover, the BAT approach with $\varepsilon = 0$ or $\rho = 0$ is identical to the NT approach, where the zero parameter of the cutoff or scale compresses the AEs back to the clean data. Therefore, the cutoff and scale strategy can connect and carry out the transition from DF-AT to NT with adjusting parameters $\varepsilon$ and $\rho$. The numerical results in Figure~\ref{fig:df_cos} show that the DF-AT approach can obtain a model with high robustness when encountering large attack strength, while the NT approach can clearly produce a model with high clean data accuracy. The proposed cutoff and scale strategy in BAT attempts to provide an approach combining the advantages of both DF-AT and NT, dynamically adjust a nonuniform budget in blind, and the model search has relatively high clean data accuracy and comprehensive robustness at the same time.

\section{Conclusion and Future Work}

Both restricted and unrestricted AT approaches cannot obtain a comprehensively robust model. To alleviate this problem, this paper proposes the \textbf{blind adversarial training (BAT)} approach that ameliorates the problem by using the \textbf{cutoff-scale} strategy in order to adaptively estimate a nonuniform budget in the generation of each AE during the training; this approach tends to obtain an comprehensively robust model,  where \textbf{adversarial accuracy} is used as the  measure for the evaluation of comprehensive robustness. 
By using BAT to train the classification models on several benchmarks, we obtain comprehensive robust models with both high accuracy and robustness for various white/black-box attacks with multiple attack methods and varying attack strengths. The individual contribution of the cutoff and scale are well-addressed for the MNIST dataset. In summary, the proposed BAT approach shows competitive performance. 

The present research is still in the early stage. There are several aspects that deserve deeper investigation:
\begin{itemize}
	\item Theoretical analysis of BAT behavior for the more general classification problems, like two circles classification problem defined in Figure~\ref{fig:tcc}; 
	\item Design other kinds of adaptive budgets to improve the cutoff strategy, like using DDN~\cite{DBLP:journals/corr/abs-1811-09600} instead of DeepFool to generate AEs in BAT;
	\item Investigate the performance of BAT on more complex models or datasets, such as ImageNet;
	\item Combine BAT and model pruning, i.e. adaptively generating AEs into the model pruning process, to develop an approach that can better balance the accuracy, robustness and efficiency. 
\end{itemize}

\section*{Acknowledgements}

This work was supported in part by the Innovation Foundation of Qian Xuesen Laboratory of Space Technology, and in part by Beijing Nova Program of Science and Technology under Grant Z191100001119129. 

\bibliographystyle{ieee_fullname}
\bibliography{paper_ref}

\clearpage
\setcounter{prop}{0}
\section*{{Appendix}}
\appendix 

In order to better support the views and results of the paper, we elaborate this appendix.
In Section~\ref{app:notation}, the notations and definitions used in main text are listed.
In Section~\ref{app:tcc}, the two circles classification problem is analyzed in detail.
In Section~\ref{app:result}, we list and compare all numerical results of datasets and attack methods mentioned in main text.
In Section~\ref{app:prove}, Given the mathematical proof of the three propositions in Section~2 of main text.

\section{Notations and Definitions}\label{app:notation}

Table~\ref{tab:Notations}~\&~\ref{tab:Definitions} list the notations and definitions used in main text of this paper.

\begin{table}[ht]
\caption{Notations}
\label{tab:Notations}
\centering
\begin{tabular}{c c}
\toprule
Notations & Full name \\
\midrule
AEs & Adversarial Examples  \\
AT  & Adversarial Training \\
NT  & Normal Training \\
BAT & Blind Adversarial Training  \\
TCC & Two Circles Classification  \\
CoS & Cutoff-Scale  \\
AA  & Adversarial Accuracy  \\
avg-AA & Average Adversarial Accuracy  \\
FGSM & Fast Gradient Sign Method  \\
PGD & Projected Gradient Descent  \\
CW  &Carlini and Wagner Attacks method  \\
DF  & DeepFool  \\
$\mathcal{L}(\theta,\mathbf{x},\mathbf{y})$ & Loss function \\
$\delta(\mathbf{x})$ & Adversarial perturbation \\
$\eta$    &  Prescribed uniform budget of AT \\
$\varepsilon$    &  Cutoff parameter of BAT \\
$\rho$    & Scale parameter of BAT \\
\bottomrule
\end{tabular}
\end{table}

\begin{table}[ht]
\caption{Definitions}
\label{tab:Definitions}
\centering
\begin{tabular}{c c}
\toprule
Names & Definitions \\
\midrule
\multirow{3}{*}{Budget} &   A parameter to constrain \\
                        &   the magnitude of AE \\
                        &   perturbation during AT.  \\
\hline
\multirow{3}{*}{Norm}   & The norm of the AE \\
                        & perturbation, include \\
                        & $\ell_0$, $\ell_2$ and $\ell_\infty$ etc. \\
\hline
\multirow{4}{*}{Weak attack}  & Attack the model by AEs \\
                              & with small norm, expression \\
                              & as the AEs is far from \\
                              & reaching the decision boundary. \\
\bottomrule
\end{tabular}
\end{table}

\section{Supplementary Numerical Results for the Two Circles Classification Problem}
\label{app:tcc}

\begin{figure*}[ht]
  \centering
  \includegraphics[width=1\textwidth]{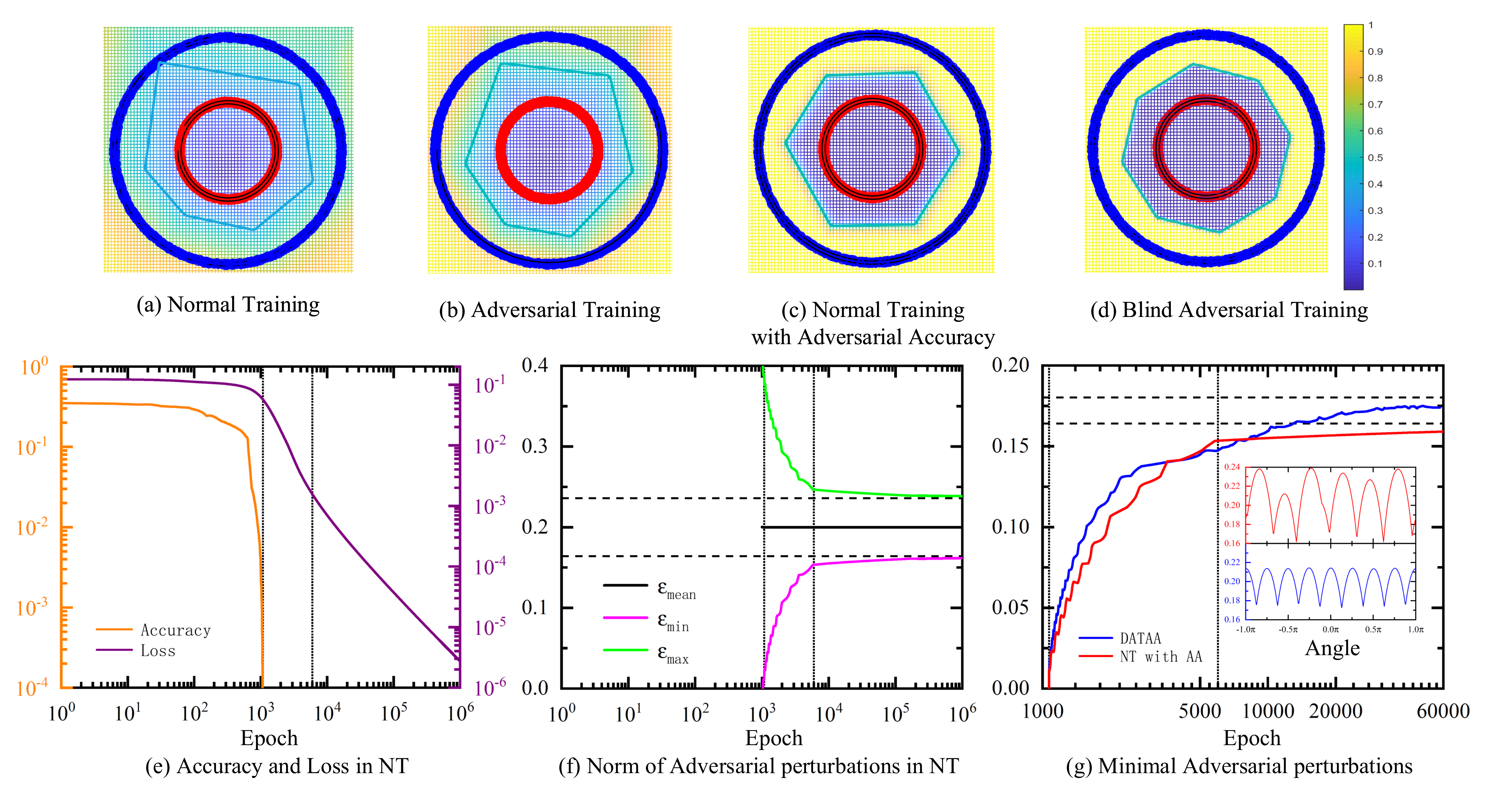}
  \caption{Comparison of different training approaches employing the two circles classification problem with the $1$-hidden-layer $6$-dim perceptron, learning rate $\alpha=0.2$, the radius of a circles $r_1=0.3, r_2=0.7$.    
    (a-d) Compare the result of normal training (NT), adversarial training with a budget of $0.1$, NT with $\A\A$ as convergence cohesion and blind adversarial training. The solid blue/red lines correspond to the datasets of two labels; the solid gray lines represent the decision boundary of the classifiers.
    (e) Accuracy and Loss in NT converge with epochs. 
    (f) Maximum, minimum and average of adversarial perturbation in NT converge with epochs. 
    (g) Compare the robustness converge with epochs and (inner) the exact perturbation at each angle (from the polar coordinate of the circles) by using NT with $\A\A$ (c) and BAT (d).}
  \label{fig:tcc_app}
\end{figure*}

\paragraph{Model overview:} As discussed in Section~1 (main text), we using BAT into the two circles classification (TCC) problem in two dimension. The target of TCC problem is to train a classifier to distinguish two concentric circles as shown in figure~\ref{fig:tcc_app}~(a). We randomly sample $5000$ in train-data and $1000$ in test-data which is large enough for the TCC problem, and use the simplest $1\text{-hidden-layer } 6 \text{-dim}$ perceptron as classifier. With the input coordinate $\mathbf{x}_1,\mathbf{x}_2 \in [-1,1]$ and output label $\mathbf{y} \in [0,1]$, the boundary at $\mathbf{y} = 0.5$ is a polygon on the two dimensional plane, 
which ideally approaching to regular polygon close to the mid-circle of two labels with radius as $r = \frac{r_1 + r_2}{2}$. The loss of TCC problem is
\begin{equation}
L(\theta,\mathbf{x},\mathbf{y}) =  \E_\mathbf{x}\,  (C(\mathbf{x})-\mathbf{y})^2
\label{eq:loss_tcc}
\end{equation}
\paragraph{Robustness:} TCC problem is analytical with $100\%$ accuracy, adversarial perturbations $\varepsilon(\mathbf{x})$ can be exact calculated by the nearest distance between input-data $\mathbf{x}$ and the decision boundary of the model,
$\varepsilon(\mathbf{x}) = \min || C^{-1}(\mathbf{y}=0.5) - \mathbf{x} ||$, where $C^{-1}(\mathbf{y}=0.5)$ is the decision boundary of the classifier $C$, i.e. the polygons between the two circles in Figure~\ref{fig:tcc_app}~(a-d). 
Define $\eta$ as the minimum of perturbations, $\eta = \min \varepsilon(\mathbf{x})$, during CoS strategy.
Since the $\A\A(\eta)$ is exactly $100\%$, higher $\eta$ means more robustness.
The $\A\A$ in TCC problem is replaced by $\eta$.

\paragraph{Training strategy:} We compare normal training (NT), Adversarial Training (AT), NT with $\A\A$ and Blind Adversarial Training (BAT) in Figure~\ref{fig:tcc_app}~(a-d). 
Figure~\ref{fig:tcc_app}~(a) lists an example of a model by NT with accuracy $100\%$ but with poor robustness. 
AT could improve a little but still far from its best, NT with $\A\A$ shows better result with closer to regular hexagon than AT, while BAT could get the best result and further push the model close to the regular octagon. 
These results clearly demonstrate the individual contribution of CoS and $\A\A$. 

\paragraph{Contribution of $\A\A$:} To clarify the comparison, we investigate the details of the training process. 
If we train the model until accuracy $100\%$, see the left dotted black line in Figure~\ref{fig:tcc_app}~(e), we may get the corresponding model with poor robustness, see Figure~\ref{fig:tcc_app}~(a). 
Then if we use $\A\A$ to monitor the training, when $\A\A$ converge, see the right dotted black line in Figure~\ref{fig:tcc_app}~(e), we may get a better model, see Figure~\ref{fig:tcc_app}~(c). While at the same time, the adversarial perturbations correspondingly change as that in Table~\ref{Comparison}, from which we found $\A\A$ will lead the model to the one with about maximum minimal attack cost. That may guarantee the individual contribution of $\A\A$. 

\paragraph{Contribution of CoS:} However, even though $\A\A$ converges, the minimal adversarial perturbation have not reached its maximum if we keep training with huge number of epochs, see the curves of minimal adversarial perturbation in Figure~\ref{fig:tcc_app}~(f). But the $10^6$ epochs is unacceptable for actual training, especially it may lead over-fitting for more practical problems.
The reason of $10^6$ epochs requiring may due to that the data is far from the boundary of the network after $6000$ epochs and then its contribution of enhancing network's robustness is weakness. While AT may relieve this but its improvement is still limited since the threshold is fixed, see Figure~\ref{fig:tcc_app}~(b). The proposed CoS strategy will dynamically adjust the threshold, pushing the model quickly flows to its best position, see Figure~\ref{fig:tcc_app}~(d,g). From Figure~\ref{fig:tcc_app}~(g), BAT converges faster and still has outstanding improvement after $6000$ epochs, compared with NT with $\A\A$. We also plot the exact perturbation at each angle by NT with $\A\A$ and BAT, the result of BAT more robustness than NT with $\A\A$, which can be understand as the force exerted by CoS-AEs on model is more powerful than clean data, as CoS-AEs can keep the manifold of clean data, it can push the model to a better position. These results clearly demonstrates the individual contribution of CoS. 

\paragraph{Result comparison:} Table~\ref{Comparison} compares the robustness (the minimal norm of adversarial perturbation) of several training and evaluation methods for TCC problem that is corresponding to Figure~\ref{fig:tcc_app}~(a-d), where the Hexagon and Octagon represent the theoretical results when the boundaries of networks are exact Hexagon and Octagon in the middle of two circles. NT only get $100\%$ accuracy but can not guarantee the robustness. AT is better than NT but only can defense the adversarial attacks below to trained threshold. From Table~\ref{Comparison} we can find that $\A\A$ will improve the robustness of NT and AT, even close to the result of the Hexagon, but converge slower and slower due to the gradient descent quickly. While the proposed CoS (BAT) will leap the model towards Octagon, get most robustness result then others.

\begin{table}[ht]
\caption{Compare the robustness (the minimal norm of adversarial perturbation) of several training and evaluation methods.}
\label{Comparison}
\centering
\begin{tabular}{ccc}
\toprule
Training  & Evaluation  & Robustness \\
\midrule
NT         & Acc.        & 0    \\
AT         & Acc.        & 0.13 \\
NT         & $\A\A$          & 0.162\\
AT         & $\A\A$          & 0.162\\
Hexagon    & Upper bond                       & 0.164\\
\textbf{BAT}      & \textbf{$\A\A$}          & \textbf{0.175}  \\
Octagon    & Upper bond                       & 0.180\\
\bottomrule
\end{tabular}
\end{table}

\section{Supplementary Numerical Results for Benchmark Datasets}\label{app:result}

\subsection{Restricted Adversarial Training}

\begin{figure*}[t]
  \centering
  \includegraphics[width=0.95\textwidth]{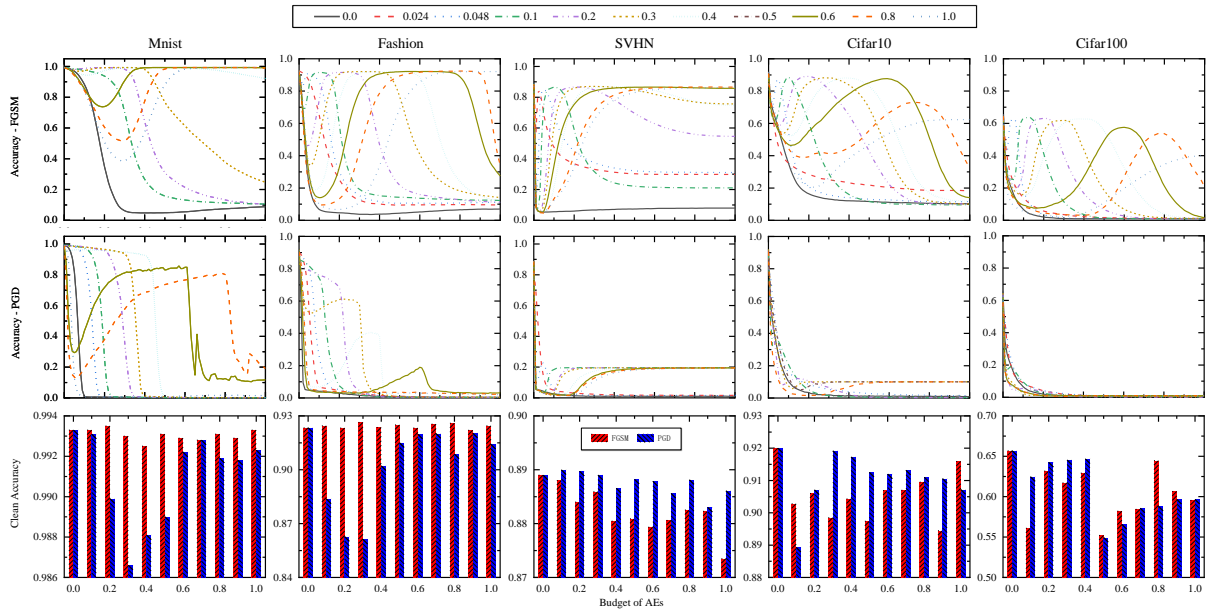}
  \caption{Comparison of the accuracy against attacks with varying budgets (x-axis of the third row) of the AT models (FGSM-AT/PGD-AT) with varying budgets (the lines in the sub-figures of the first two rows) on MNIST, Fashion-MNIST, SVHN, CIFAR10 and CIFAR100. We also show the accuracy on clean data in the sub-figures of the third row. }
  \label{fig:fgsm_pgd_app}
\end{figure*}

As shown in Figure~2 (main text), we train FGSM-AT/PGD-AT models with varying budgets (the lines), and then attack these AT models using AEs generated with varying budgets (the x-axis). Figure~\ref{fig:fgsm_pgd_app} shows more numerical results about FGSM-AT/PGD-AT. 
Analogous to the TCC problem, if we train with a small budget, the AT model is always easily attacked with a large budget; note the drop in the accuracy when the models are trained with a budget larger than $0.5$ for FGSM-AT (the first row of Figure~\ref{fig:fgsm_pgd_app}) and the ones with a budget larger than $0.6$ for PGD-AT (the second row of Figure~\ref{fig:fgsm_pgd_app}). 
In addition, if we train with a large budget, the AT model is even weak when confronting the attack with a small budget. 
Furthermore, the improvement on robustness also has a remarkable influence on the clean data accuracy, as shown in the third row of Figure~\ref{fig:fgsm_pgd_app}.
Based on the existing numerical experiments and references, we can summarize that the model obtained with the prescribed budget is only robust when encountering the attack with the same strength.
The model is obviously weak when facing an attack that is stronger than the prescribed budget used in AE generation and is overly defensive when facing a small attack or on clean data.

\subsection{Time Cost}
All experiments are implemented on single NVIDIA Tesla V100 GPU. For DF-AT and BAT, It cost about a hour's magnitude for training MNIST, Fashion-MNIST and SVHN datasets, and a few hours for training CIFAR10 and CIFAR100. 

As shown in the Algorithm in main text, the main add-ons of BAT is CoS strategy, which is very simple from the point of view of computation cost. Thus the extra expense of BAT than DF-AT is negligible. 

\section{Prove details in Section~2} \label{app:prove}

This section explains in detail the derivation process of the Propositions in Section~2 (main text).

\subsection{Model and basic assumptions}

Most neural networks are highly non-linear and complex, to theoretical analysis AT, we use a simplified $1$-layer perceptron defined as, $\mathbf{y}=\sigma(\mathbf{W} \mathbf{x} + b)$, which is used in TCC problem in Section~$1$ (main text). It is usually use sigmoid activation function $\sigma(x) = \frac{1}{1+e^{-x}}$. In order to study the impact of AEs on neural networks more clearly, we consider a pair of neighbor data $\mathbf{x}^1, \mathbf{x}^2$ within different labels $\mathbf{y}^1, \mathbf{y}^2$, and the corresponding AEs represent as $\mathbf{x}^1_{\rm adv}, \mathbf{x}^2_{\rm adv}$. In this way, we can clearly analysis the impact of the two data, and deduce the general nature of AEs during AT. 

We hope to get a constructive conclusion from the strict extrapolation of the simple problem, and then generalize and expand it to a general model to reveal the effectiveness and significance of the proposed BAT approach.

\begin{figure}[t]
  \centering
  \includegraphics[width=\linewidth]{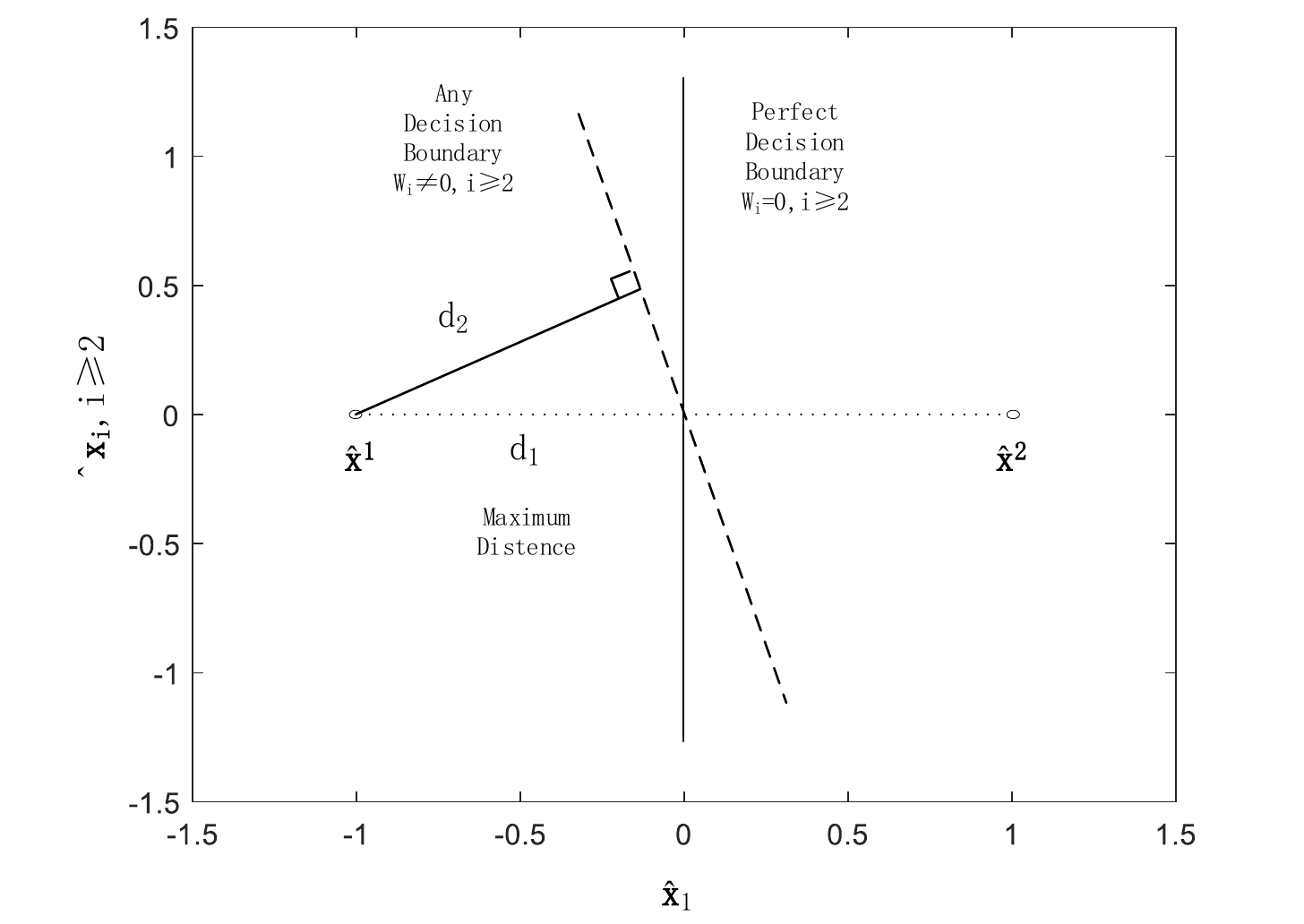}
  \caption{Sketch map of the model, only perfect decision boundary with $W_i = 0, i\geq 2$ has the maximum distance between data and decision boundary.
  }
  \label{fig:model}
\end{figure}

\begin{lemma}
$\forall$ $\mathbf{x}^1, \mathbf{x}^2 \in \R^d$, $\exists$ a linear transformation matrix consist of translation, rotate and scale, $M = T R S$. After transformation $\hat{\mathbf{x}} = M \mathbf{x}$, must have $\hat{\mathbf{x}}^1 = (-1, \hat{0})^\mathrm{T}, \hat{\mathbf{x}}^2 = (1, \hat{0})^\mathrm{T}$, where $\hat{0} \in \R^{d-1}$ is $d-1$ dimensional zero vector.
\end{lemma}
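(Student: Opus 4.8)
The plan is to exhibit $M$ explicitly as the composition of three elementary maps and then verify by back-substitution that it sends $\mathbf{x}^1,\mathbf{x}^2$ to the prescribed points. Since the two data carry different labels, I may assume $\mathbf{x}^1\neq\mathbf{x}^2$, so that $d_0:=\|\mathbf{x}^2-\mathbf{x}^1\|>0$; this nondegeneracy is the only genuine hypothesis needed. Reading $M=TRS$ as function composition applied right-to-left (scale first, then rotate, then translate), I would take a uniform scaling $S=cI$, a rotation $R$, and a translation $T(\cdot)=(\cdot)+\mathbf{t}$, and solve for the scalar $c$, the matrix $R$, and the vector $\mathbf{t}$ so that $M\mathbf{x}^1=(-1,\hat 0)^\mathrm{T}$ and $M\mathbf{x}^2=(1,\hat 0)^\mathrm{T}$.

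First I would reduce the two target equations to their difference and their sum. Subtracting yields $c\,R(\mathbf{x}^2-\mathbf{x}^1)=2e_1$, which I satisfy by setting $c=2/d_0$ and choosing $R$ to map the unit vector $\mathbf{u}:=(\mathbf{x}^2-\mathbf{x}^1)/d_0$ to the first coordinate vector $e_1$. Adding the two equations then gives $c\,R(\mathbf{x}^1+\mathbf{x}^2)+2\mathbf{t}=0$, which simply \emph{defines} the translation $\mathbf{t}=-\tfrac{c}{2}R(\mathbf{x}^1+\mathbf{x}^2)$. Once $R$ is in hand, $c$ and $\mathbf{t}$ are forced, and a one-line computation confirms $M\mathbf{x}^1=\tfrac{c}{2}R(\mathbf{x}^1-\mathbf{x}^2)=-e_1=(-1,\hat 0)^\mathrm{T}$ and, symmetrically, $M\mathbf{x}^2=(1,\hat 0)^\mathrm{T}$, so the signs come out correctly.

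The one nontrivial ingredient, and the step I expect to be the main obstacle, is producing a rotation $R\in SO(d)$ with $R\mathbf{u}=e_1$, i.e. invoking transitivity of the special orthogonal group on the unit sphere. I would construct it concretely: complete $\mathbf{u}$ to an orthonormal basis $\{\mathbf{u},\mathbf{u}_2,\dots,\mathbf{u}_d\}$ by Gram--Schmidt, form the orthogonal matrix $Q$ whose columns are these vectors (so $Qe_1=\mathbf{u}$), and take $R=Q^{\mathrm{T}}$, which gives $R\mathbf{u}=e_1$. To guarantee $R$ is a proper rotation rather than a general orthogonal map, I would, whenever $d\ge 2$ and $\det Q=-1$, negate the auxiliary vector $\mathbf{u}_2$; this flips the determinant to $+1$ without disturbing $Qe_1=\mathbf{u}$.

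The only degenerate case is $d=1$, where the "rotation" is trivial: if $\mathbf{x}^2>\mathbf{x}^1$ the identity works, and if $\mathbf{x}^2<\mathbf{x}^1$ the orientation reversal cannot be realized by a rotation, so I would dispatch it in a short remark by either permitting a signed scaling or giving the two target points by the direct affine formula. For the dimensions relevant to the paper ($d\ge 2$), the construction above is complete, and the verification is purely mechanical once the existence of $R$ is granted.
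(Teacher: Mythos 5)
Your proof is correct, and in fact it supplies something the paper never provides: the paper states this lemma in Appendix~D.1 with no proof at all, using it only as a ``without loss of generality'' normalization before the perceptron analysis. Your construction is the natural one and it is carried out cleanly: splitting the two target equations into their difference and sum forces $c = 2/\|\mathbf{x}^2-\mathbf{x}^1\|$ and $\mathbf{t} = -\tfrac{c}{2}R(\mathbf{x}^1+\mathbf{x}^2)$ once $R\mathbf{u}=e_1$ is available, and the Gram--Schmidt construction of $R\in SO(d)$ (with the sign fix on an auxiliary column to land in the special orthogonal group) is standard and correct. Two of your side remarks are genuinely valuable beyond what the paper says: first, the lemma as literally stated (``$\forall\,\mathbf{x}^1,\mathbf{x}^2$'') is false when $\mathbf{x}^1=\mathbf{x}^2$, and your observation that distinct labels justify $\mathbf{x}^1\neq\mathbf{x}^2$ is exactly the implicit hypothesis the paper relies on; second, the $d=1$ case indeed cannot be handled by a proper rotation and positive scaling alone, though it is irrelevant to the paper's setting. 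The only cosmetic caveat is that $M$ is affine rather than linear (translation is not a linear map), so ``linear transformation matrix'' should be read as an affine map or a matrix in homogeneous coordinates --- a looseness inherited from the paper's own statement, not a defect of your argument.
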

Thus, without loss of generality, we only consider data $\hat{\mathbf{x}}^1, \hat{\mathbf{x}}^2$ with $\mathbf{y}^1=0, \mathbf{y}^2=1$. Now, we can assume the perceptron is monotonic in concerned region $\hat{\mathbf{x}} \in [\hat{\mathbf{x}}^1, \hat{\mathbf{x}}^2]$ corresponding $\mathbf{y} \in [0,1]$. 

\subsection{Normal Training}

Let's take the square loss as an example, the loss-function during NT can be written as,
\be
\mathcal{L}_{\rm NT} = \min_{\mathbf{W},b} \left\{ \right. \sum_{i=1,2} 
[\sigma(\mathbf{W} \mathbf{x}^i + b) - \mathbf{y}^i]^2 \nonumber\\
+ \lambda || \mathbf{W} ||_2 + \lambda || b ||_2 \left. \right\},
\ee
where $\lambda$ is Lagrange multiplier for regularization and be very small, due to the data has no constraints on the subsequent components of parameter $\mathbf{W}$($W_i, i\geq 2$). It is easy to bring in data and get the minimum value by,
\be
\frac{\partial \mathcal{L}_{\rm NT}}{\partial b} &=& 
2[\sigma(W_1  (-1) + b) - 0]
\sigma^\prime(W_1  (-1) + b) \nonumber\\
&+& 2[\sigma(W_1  (1) + b) - 1]
\sigma^\prime(W_1  (1) + b)  \nonumber\\
&+& \lambda{\rm sign}(b) = 0\\
\frac{\partial \mathcal{L}_{\rm NT}}{\partial W_1} &=& 
2[\sigma(W_1  (-1) + b) - 0]
\sigma^\prime(W_1  (-1) + b) (-1) \nonumber\\
&+& 2[\sigma(W_1  (1) + b) - 1]
\sigma^\prime(W_1  (1) + b) (1)   \nonumber\\
&+& \lambda{\rm sign}(W_1) = 0\\
\frac{\partial \mathcal{L}_{\rm NT}}{\partial W_i} &=& \lambda{\rm sign}(W_i), i\geq 2. \label{eq:nt_w2}
\ee

Obviously, with the derivative form of activation function $\sigma^\prime(x) = \sigma(x)(1-\sigma(x))$, and $\sigma(-x) = (1-\sigma(x))$. Simultaneous these equations, always satisfied $\sigma(x)>0$ and $\sigma^\prime(x)>0$, $W_1$ and $b$ can not be non-zero at the same time. If $W_1=0$, $b \neq 0$, the equations has no solution, thus the minimum point is satisfied only when
\be
b &=& 0 \\
W_1 &{\rm satisfy}& \lambda(1+e^{W_1})^2(1+e^{-W_1})=4 \label{lambda}\\
W_i &=& 0, i\geq 2. \label{eq:nt_w2_value}
\ee
From data $\hat{\mathbf{x}}^1, \hat{\mathbf{x}}^2$, it is intuitive to know the idea decision boundary is the line of 
\be
\hat{\mathbf{x}}_{1} = \frac{\hat{\mathbf{x}}^1_1 + \hat{\mathbf{x}}^2_1}{2} = 0.
\ee
Mathematically the decision boundary defined as 
\be
\{\hat{\mathbf{x}} | \sigma(\mathbf{W} \hat{\mathbf{x}} + b)=\frac{1}{2} \},
\ee
which can be reduced to 
\be
\mathbf{W} \hat{\mathbf{x}} + b = 0.
\ee
\begin{figure}[ht]
  \centering
  \includegraphics[width=\linewidth]{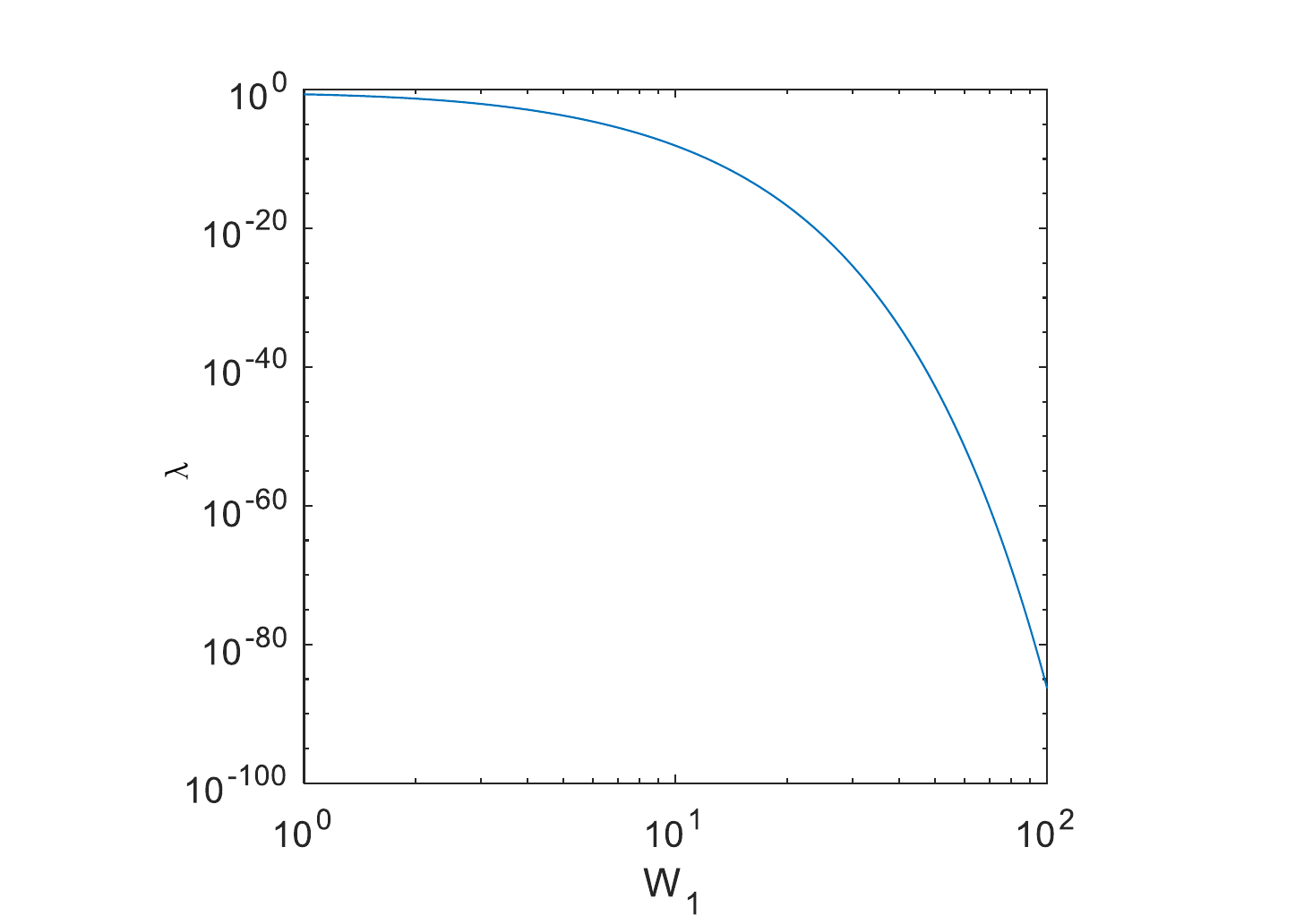}
  \caption{The value of $\lambda$ and $W_1$ that satisfies the Equation~\eqref{lambda}.
  }
  \label{fig:lambda}
\end{figure}

For the result of NT, bring in the minimum point and we can know that the convergence limit of NT is corresponding to the idea classifier with the perfect decision boundary. 
As shown in Figure~\ref{fig:model}, only the line $\hat{\mathbf{x}}_{1} = 0$ can make the network have the maximum adversarial perturbation. 
Different $\lambda$ produces different $W_1$ as in Figure~\ref{fig:lambda}, and the larger $W_1$ lead the output of network more close to $0$ and $1$, which means the larger confidence. 

\subsection{Restricted Adversarial Training}

\begin{lemma}
Gradient based constrained AEs generating methods generate the AEs along $\hat{\mathbf{x}}_1$ axis. Written as,
\be
\hat{\mathbf{x}}^1_{\rm adv} &=& (1 - \eta) \hat{\mathbf{x}}^1 = (-1 + \eta, \hat{0})^\mathrm{T}, \\
\hat{\mathbf{x}}^2_{\rm adv} &=& (1 - \eta) \hat{\mathbf{x}}^2 = (1 - \eta, \hat{0})^\mathrm{T},
\ee
where $\eta \in [0,2]$ is the adversarial perturbation, constraint the AEs should between the two data $\hat{\mathbf{x}}^1$ and $\hat{\mathbf{x}}^2$.
\end{lemma}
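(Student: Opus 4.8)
The plan is to exploit the structure of the converged perceptron established in the preceding NT analysis: there $b = 0$ and $W_i = 0$ for all $i \geq 2$, so the weight vector reduces to $\mathbf{W} = (W_1, \hat{0})^{\mathrm{T}}$ with $W_1 > 0$. The central observation is that for any gradient-based attack the input-space perturbation is governed by $\nabla_{\hat{\mathbf{x}}}\mathcal{L}$, and by the chain rule this gradient is always a scalar multiple of $\mathbf{W}$. Since $\mathbf{W}$ points purely along the first coordinate, every gradient step is confined to the $\hat{\mathbf{x}}_1$ axis, which is exactly the claimed direction.

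First I would compute the gradient explicitly. Writing $s^i = \sigma(\mathbf{W}\hat{\mathbf{x}}^i + b)$, the chain rule gives
\be
\nabla_{\hat{\mathbf{x}}}\mathcal{L} &=& 2\,(s^i - \mathbf{y}^i)\,\sigma'(\mathbf{W}\hat{\mathbf{x}}^i+b)\,\mathbf{W},
\ee
which, under $\mathbf{W} = (W_1,\hat{0})^{\mathrm{T}}$, is a scalar times $(1,\hat{0})^{\mathrm{T}}$. For FGSM this direction is preserved by the sign operator because the off-axis components are identically zero, so their sign contributes nothing; for PGD I would argue inductively that, starting on the axis and stepping along a gradient that remains axis-aligned, every iterate stays on the axis, and the $\ell_\infty$ projection onto the constraint ball does not move a point off the axis. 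Thus the total perturbation for either method is $\eta\,(1,\hat{0})^{\mathrm{T}}$ for some magnitude $\eta \geq 0$.

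Next I would fix the sign of the displacement separately for the two points, since the attack moves each toward the decision boundary $\hat{\mathbf{x}}_1 = 0$. For $\hat{\mathbf{x}}^1$ (label $0$) the factor $s^1 - 0 = \sigma(-W_1) \in (0,\tfrac12)$ is positive while $\sigma' > 0$ and $W_1 > 0$, so the first component of the gradient is positive and ascending the loss increases $\hat{\mathbf{x}}_1$ from $-1$, yielding $\hat{\mathbf{x}}^1_{\rm adv} = (-1+\eta,\hat{0})^{\mathrm{T}} = (1-\eta)\hat{\mathbf{x}}^1$. For $\hat{\mathbf{x}}^2$ (label $1$) the factor $s^2 - 1 = \sigma(W_1) - 1 < 0$ flips the sign, so ascending the loss decreases $\hat{\mathbf{x}}_1$ from $1$, giving $\hat{\mathbf{x}}^2_{\rm adv} = (1-\eta,\hat{0})^{\mathrm{T}} = (1-\eta)\hat{\mathbf{x}}^2$. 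The constraint $\eta \in [0,2]$ is exactly the requirement that the AEs not cross past the opposite data point.

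The main obstacle I anticipate is not the single-step FGSM computation, which is immediate, but making the ``stays on the axis'' claim airtight for the whole class of constrained gradient methods. In particular I expect to need care in verifying that the sign operator acting at exactly-zero off-axis components, and the clipping/projection step of PGD, never inject an off-axis component, and in confirming that the symmetric weight structure $W_i = 0$ ($i\geq2$) is the relevant operating regime throughout the perturbation trajectory rather than only at the isolated NT optimum.
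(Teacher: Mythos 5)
Your proposal is correct and takes essentially the same route as the paper: the paper's own (much terser) proof likewise assumes AT starts from the NT-converged regime $W_i=0$ ($i\geq 2$), cites the FGSM update $\hat{\mathbf{x}}_{\rm adv}-\hat{\mathbf{x}} = \eta\,\mathrm{sign}(\nabla\mathcal{L})$, and concludes from $\partial\mathcal{L}/\partial\hat{\mathbf{x}}_i = 0$ ($i\geq 2$) that the perturbation lies along the $\hat{\mathbf{x}}_1$ axis. Your explicit chain-rule computation, the sign analysis for the two points, and the PGD induction are details the paper leaves implicit, and your worry about the operating regime is resolved exactly as you suspected: the paper simply assumes the off-axis weights have converged quickly since AT is initialized from several epochs of NT.
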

\begin{proof}
While usual AT approaches take NT with several epochs as starting point, we assume that the $W_i$( $i\geq 2$) has converged quickly. Using FGSM AEs as an example,
\be
\hat{\mathbf{x}}_{\rm adv} - \hat{\mathbf{x}} = \eta \text{sign} (\nabla L(\hat{\mathbf{x}},\hat{\mathbf{y}})),
\ee
from Equation~\eqref{eq:nt_w2}~\&~\eqref{eq:nt_w2_value},  the gradient of loss only have $\hat{\mathbf{x}}_1$ component, $\frac{\partial \mathcal{L}}{\partial \hat{\mathbf{x}}_i} = 0, i \geq 2$. 
\end{proof}

\begin{prop} For classifying two points $\mathbf{x}^1, \mathbf{x}^2$ with a $1$-layer perceptron model, while both restricted AT and NT can obtain the model with the best robustness, the performance of restricted AT is much better than that of NT because the prescribed budget accelerates the learning process.
\end{prop}

\begin{proof}

The loss function during AT now be,
\be
\mathcal{L}_{\rm AT} = \min_{\mathbf{W},b} \left\{ \right. \sum_{i=1,2} 
\frac{1}{2}[\sigma(\mathbf{W} \hat{\mathbf{x}}^i + b) - \mathbf{y}^i]^2 \nonumber\\
+ \frac{1}{2}[\sigma(\mathbf{W} \hat{\mathbf{x}}^i_{\rm adv} + b) - \mathbf{y}^i]^2 \nonumber\\
+ \lambda || \mathbf{W} ||_2 + \lambda || b ||_2 \left. \right\}.
\ee
We have the first derivative of loss function as, 
\be
\frac{\partial \mathcal{L}_{\rm AT}}{\partial b} &=& 
[\sigma(W_1  (-1) + b) - 0]
\sigma^\prime(W_1 (-1) + b) \nonumber\\
&+& [\sigma(W_1  (\eta-1) + b) - 0]
\sigma^\prime(W_1 (\eta-1) + b) \nonumber\\
&+& [\sigma(W_1  (1) + b) - 1]
\sigma^\prime(W_1  (1) + b)  \nonumber\\
&+& [\sigma(W_1  (1-\eta) + b) - 1]
\sigma^\prime(W_1  (1-\eta) + b)  \nonumber\\
&+& \lambda{\rm sign}(b) = 0 \\
\frac{\partial \mathcal{L}_{\rm AT}}{\partial W_1} &=& 
[\sigma(W_1  (-1) + b) - 0]
\sigma^\prime(W_1  (-1) + b) (-1) \nonumber\\
&+& [\sigma(W_1  (\eta-1) + b) - 0] \nonumber\\
&& \sigma^\prime(W_1  (\eta-1) + b) (\eta-1) \nonumber\\
&+& [\sigma(W_1  (1) + b) - 1]
\sigma^\prime(W_1  (1) + b) (1)   \nonumber\\
&+& [\sigma(W_1  (1-\eta) + b) - 1] \nonumber\\
&& \sigma^\prime(W_1  (1-\eta) + b) (1-\eta)   \nonumber\\
&+& \lambda{\rm sign}(W_1) = 0\\
\frac{\partial \mathcal{L}_{\rm AT}}{\partial W_i} &=& \lambda{\rm sign}(W_i), i\geq 2.
\ee
Simultaneous these equations, obviously $W_i=0$($i\geq 2$), and $b=0$ is the solution of the equations. 
\textbf{Thus,  restricted AT can lead the model to the one with the best robustness (best decision boundary).}

Due to $\eta \in [0,2]$, it is easy to know that there is similar perfect classifier with NT, the only different is that $W_1$ has a slight movement. $W_1$ satisfy,
\be
\frac{\partial \mathcal{L}_{\rm AT}}{\partial W_1} &=&  -2(1-\sigma(W_1))^2\sigma(W_1) \nonumber\\
&-& 2(1-\eta)(1-\sigma((1-\eta)W_1))^2\sigma((1-\eta)W_1) \nonumber\\
&+& \lambda{\rm sign}(W_1) =0 \label{eq:eta}
\ee

\begin{figure}[t]
  \centering
  \includegraphics[width=\linewidth]{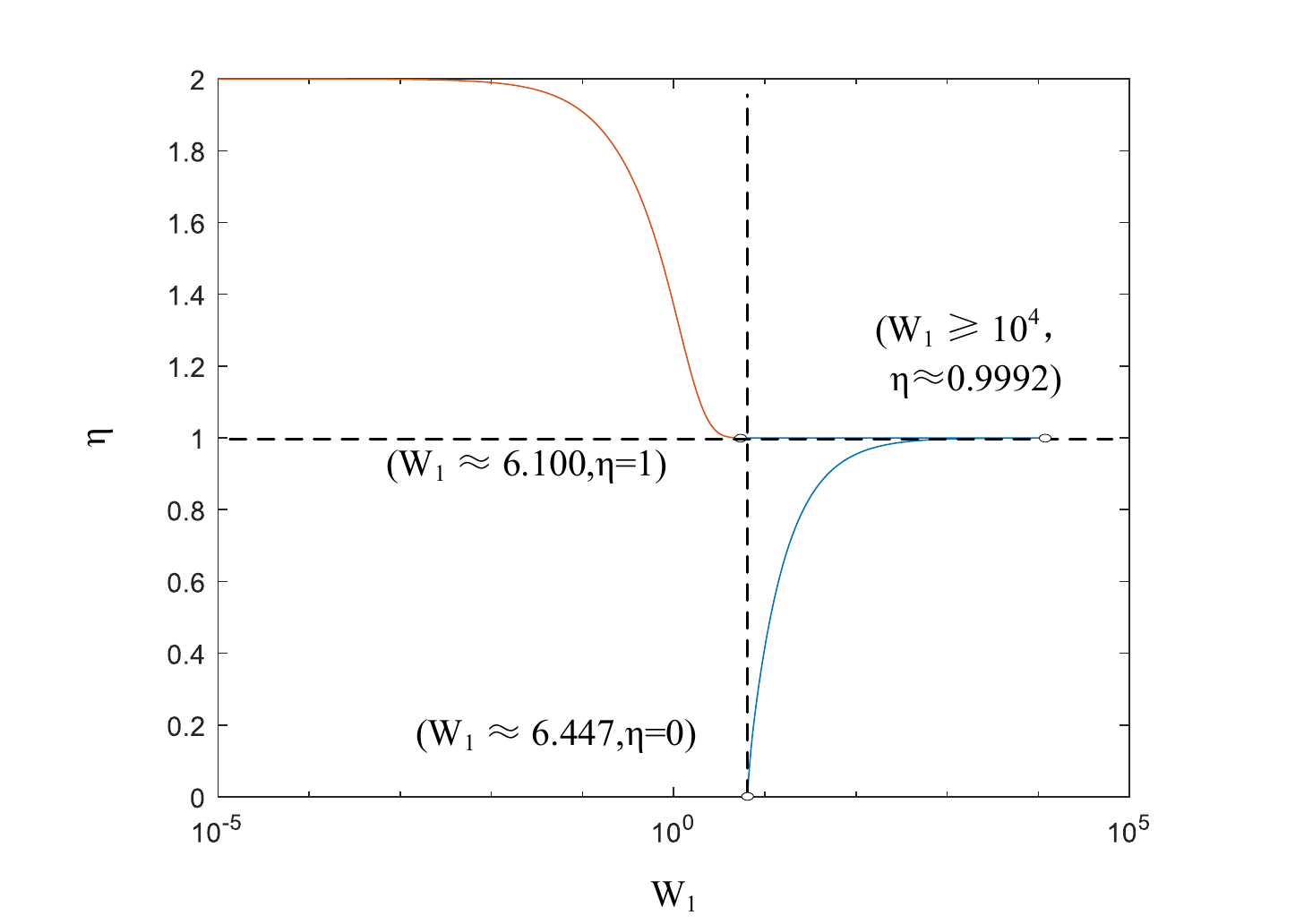}
  \caption{The value of $\eta$ and $W_1$ that satisfies Equation~\eqref{eq:eta} with $\lambda=10^{-5}$. }
  \label{fig:eta}
\end{figure}

It can be found from the solution of the equation, the convergence $W_1$ rapid growth with $\eta$ increases from $0$; until $\eta\approx 0.9992$ (approach but slightly less than $1$), $W_1$ reach its maximum; after that $W_1$ rapid fall down, and when $\eta = 1$, $W_1 \approx 6.100$ which less than $W_1 \approx 6.447$ when $\eta=0$, as shown in Figure~\ref{fig:eta}. \textbf{Thus, there exists a prescribed budget (lead the AEs tending towards the decision boundary) that makes the output model with the largest $W_1$ and best confidence (logit output as close as possible to the real label).} 

Bring in $\sigma^{\prime \prime}(x) = \sigma^{\prime}(x)\sigma(-x)-\sigma(x)\sigma^{\prime}(x) = \sigma^{\prime}(x)(1-2\sigma(x))$. At the minimum point of loss when first derivative be zero, the second derivative of the loss function as follows,
\be
\frac{\partial^2 \mathcal{L}_{\rm AT}}{(\partial b)^2} &=& 
\sigma^\prime(W_1 - b)[1-\sigma(W_1 - b)] \nonumber\\
&& [3\sigma(W_1 - b)-1] \nonumber\\
&+& \sigma^\prime(W_1 + b)[1-\sigma(W_1 + b)] \nonumber\\
&& [3\sigma(W_1 + b)-1] \nonumber\\
&+& \sigma^\prime((1-\eta)W_1 - b)[1-\sigma((1-\eta)W_1 - b)] \nonumber\\
&& [3\sigma((1-\eta)W_1 - b)-1] \nonumber\\
&+& \sigma^\prime((1-\eta)W_1 + b)[1-\sigma((1-\eta)W_1 + b)] \nonumber\\
&& [3\sigma((1-\eta)W_1 + b)-1]
\ee
\be
\frac{\partial^2 \mathcal{L}_{\rm AT}}{\partial b \partial W_1} &=& 
-\sigma^\prime(W_1 - b)[1-\sigma(W_1 - b)]\nonumber\\
&& [3\sigma(W_1 - b)-1] \nonumber\\
&+& \sigma^\prime(W_1 + b)[1-\sigma(W_1 + b)]\nonumber\\
&& [3\sigma(W_1 + b)-1] \nonumber\\
&+& \sigma^\prime((1-\eta)W_1 - b)[1-\sigma((1-\eta)W_1 - b)] \nonumber\\
&& [3\sigma((1-\eta)W_1 - b)-1](\eta-1) \nonumber\\
&+& \sigma^\prime((1-\eta)W_1 + b)[1-\sigma((1-\eta)W_1 + b)] \nonumber\\
&& [3\sigma((1-\eta)W_1 + b)-1](1-\eta)
\ee
\be
\frac{\partial^2 \mathcal{L}_{\rm AT}}{(\partial W_1)^2} &=& 
\sigma^\prime(W_1 - b)[1-\sigma(W_1 - b)]\nonumber\\
&& [3\sigma(W_1 - b)-1] \nonumber\\
&+& \sigma^\prime(W_1 + b)[1-\sigma(W_1 + b)]\nonumber\\
&& [3\sigma(W_1 + b)-1] \nonumber\\
&+& \sigma^\prime((1-\eta)W_1 - b)[1-\sigma((1-\eta)W_1 - b)] \nonumber\\
&& [3\sigma((1-\eta)W_1 - b)-1](1-\eta)^2 \nonumber\\
&+& \sigma^\prime((1-\eta)W_1 + b)[1-\sigma((1-\eta)W_1 + b)] \nonumber\\
&& [3\sigma((1-\eta)W_1 + b)-1](1-\eta)^2.
\ee

\begin{figure}[t]
  \centering
  \includegraphics[width=\linewidth]{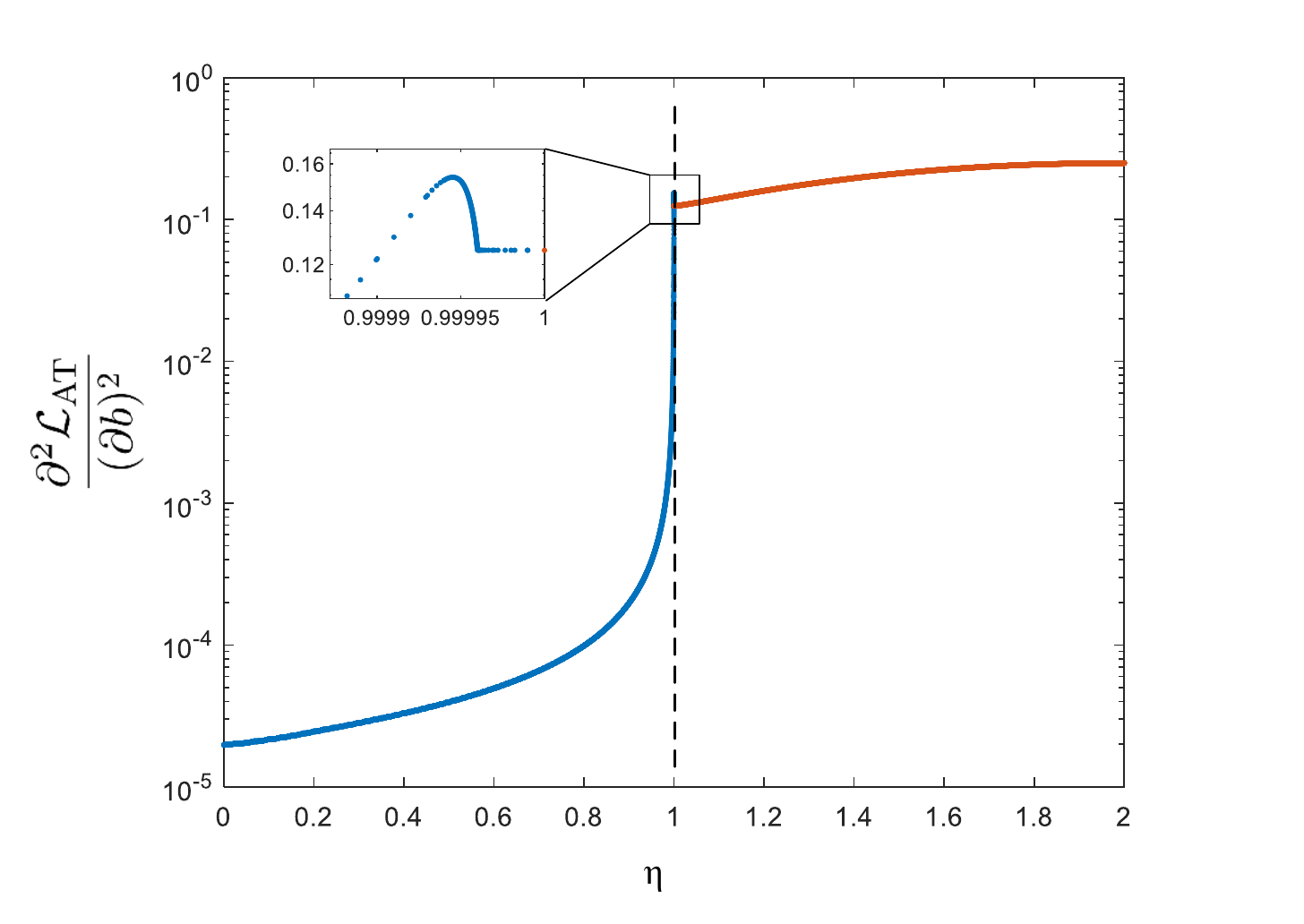} 
  \caption{The properties of the curve $\frac{\partial^2 \mathcal{L}_{\rm AT}}{(\partial b)^2}$ under best $W_1$, $b=0$ with the change of $\eta$. }
  \label{fig:l2at_all}
\end{figure}

The relationship between the second derivative can be summarized as follows. For all $\eta$, $\frac{\partial^2 \mathcal{L}_{\rm AT}}{(\partial b)^2} > \frac{\partial^2 \mathcal{L}_{\rm NT}}{(\partial b)^2} > 0$, and exist a maximum point around $\eta \leq 1$ as in Figure~\ref{fig:l2at_all}. For $\eta<1$, $b=0$ and $W_1>0$, it is similar to know that $\frac{\partial^2 \mathcal{L}_{\rm AT}}{(\partial W_1)^2} > \frac{\partial^2 \mathcal{L}_{\rm NT}}{(\partial W_1)^2} > 0$, with maximum point very close to $1$. But when $\eta \approx 1$, factor $1-\eta$ bring the second derivative $\frac{\partial^2 \mathcal{L}_{\rm AT}}{(\partial W_1)^2}$ diminish, roughly equal to that of NT.

\textbf{Thus, there exists a prescribed budget (lead the AEs tending towards the decision boundary) that accelerates the learning process (with larger second derivative of loss function on global minimum).}
\end{proof}

\subsection{Unrestricted Adversarial Training}

\begin{lemma}
Gradient based unconstrained AEs generating methods generate the AEs along $\hat{\mathbf{x}}_1$ axis. Written as,
\be
\hat{\mathbf{x}}^1_{\rm adv} &=& (1 - \eta_1) \hat{\mathbf{x}}^1 = (-1 + \eta_1, \hat{0})^\mathrm{T}, \\
\hat{\mathbf{x}}^2_{\rm adv} &=& (1 - \eta_2) \hat{\mathbf{x}}^2 = (1 - \eta_2, \hat{0})^\mathrm{T},
\ee
where $\eta_i \in [0,2], i=1,2,$ is the adversarial perturbation, and satisfy $\eta_1+\eta_2=2$, as AEs strictly on the current decision boundary.
\end{lemma}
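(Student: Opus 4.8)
The plan is to mirror the proof of the preceding restricted-AT lemma for the directional claim, and then exploit the defining property of unconstrained attacks to pin down the constraint $\eta_1+\eta_2=2$. First I would recall the standing assumption that the trailing weights have already converged, so that $W_i=0$ for $i\geq 2$ (Equations~\eqref{eq:nt_w2} and~\eqref{eq:nt_w2_value}). Under this assumption the loss depends on the input only through the scalar $W_1\hat{\mathbf{x}}_1+b$, so the input gradient $\nabla_{\hat{\mathbf{x}}}\mathcal{L}$ carries only a first component, $\partial\mathcal{L}/\partial\hat{\mathbf{x}}_i=0$ for $i\geq 2$, exactly as in the restricted case. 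Any gradient-based attack — DeepFool included, since each of its iterates is a multiple of the current input gradient — therefore displaces $\hat{\mathbf{x}}^1$ and $\hat{\mathbf{x}}^2$ purely along the $\hat{\mathbf{x}}_1$ axis, which yields the stated forms $\hat{\mathbf{x}}^1_{\rm adv}=(-1+\eta_1,\hat{0})^{\mathrm T}$ and $\hat{\mathbf{x}}^2_{\rm adv}=(1-\eta_2,\hat{0})^{\mathrm T}$ with $\eta_1,\eta_2\in[0,2]$.

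Next I would invoke what separates the unconstrained setting from the restricted one: whereas FGSM/PGD stop at a prescribed budget, DeepFool produces the minimal perturbation that carries the point onto the current decision boundary. With $W_i=0$ for $i\geq 2$, that boundary is the hyperplane $\{\hat{\mathbf{x}}:W_1\hat{\mathbf{x}}_1+b=0\}$, i.e. the single coordinate value $\hat{\mathbf{x}}_1=c:=-b/W_1$. Both adversarial examples must land on this hyperplane, so $-1+\eta_1=c$ and $1-\eta_2=c$; eliminating $c$ gives $\eta_1+\eta_2=2$. This is the second assertion, and it makes explicit that the two AEs coincide at the single point $(c,\hat{0})^{\mathrm T}$ on the boundary.

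I expect the only genuine obstacle to be justifying that DeepFool lands \emph{exactly} on the boundary rather than along some other ray or slightly beyond it. The directional part is immediate once the trailing weights vanish, but the equality $\eta_1+\eta_2=2$ rests on two idealizations that I would state plainly: that the overshoot used in practice is taken to zero so the iterate sits on the boundary instead of just past it, and that the monotonicity assumed for $\sigma$ on $[\hat{\mathbf{x}}^1,\hat{\mathbf{x}}^2]$ guarantees a unique closest boundary point, reached by moving inward along $\hat{\mathbf{x}}_1$. Granting these, the remainder is a short projection computation, so the effort is almost entirely in setting up the geometry rather than in any estimate.
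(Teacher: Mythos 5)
Your proposal is correct and takes essentially the same route as the paper: the paper's own proof simply \emph{assumes} the AEs lie strictly on the current decision boundary and reads off $\eta_1+\eta_2=2$, with the directional claim inherited from the restricted-AT lemma's gradient argument ($\partial\mathcal{L}/\partial\hat{\mathbf{x}}_i=0$ for $i\geq 2$ since $W_i=0$). Your version is in fact more explicit than the paper's, which states the on-boundary placement as a modeling assumption rather than deriving it from DeepFool's minimal-perturbation property with vanishing overshoot.
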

\begin{proof}
For unrestricted AT, the optimizing objectives of AEs is to fool the network, with AEs beyond the current decision boundary. Because the current decision boundary usually do not equal to perfect decision boundary, the adversarial perturbation of $\hat{\mathbf{x}}^1$ and $\hat{\mathbf{x}}^2$ may not equal. Assume the AEs strictly on the current decision boundary, thus the adversarial perturbation satisfy $\eta_1+\eta_2=2$.
\end{proof}

\begin{prop}
For classifying two points $\mathbf{x}^1, \mathbf{x}^2$ with a $1$-layer perceptron model, 
unrestricted AT can not lead the model to the one with the best robustness.
\end{prop}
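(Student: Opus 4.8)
The plan is to mirror the structure of the restricted-AT proof, but to feed in the adversarial positions supplied by the unrestricted-AT lemma, namely $\hat{\mathbf{x}}^1_{\rm adv}=(-1+\eta_1,\hat{0})^{\mathrm T}$ and $\hat{\mathbf{x}}^2_{\rm adv}=(1-\eta_2,\hat{0})^{\mathrm T}$ with $\eta_1+\eta_2=2$. First I would record the crucial structural difference from the restricted case: because both perturbations terminate on the \emph{current} boundary, both adversarial points coincide at the single location $x^\ast=-b/W_1$, and there $\sigma(\mathbf{W}\hat{\mathbf{x}}^i_{\rm adv}+b)=\sigma(0)=\tfrac12$. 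Hence the two adversarial residuals are exactly $+\tfrac12$ and $-\tfrac12$, of equal size and opposite sign. Substituting these into $\partial\mathcal{L}/\partial b$ and $\partial\mathcal{L}/\partial W_1$, computed exactly as in the restricted-AT derivation, I would show that the adversarial contribution cancels identically: this is precisely the $\eta\to1$ limit of the restricted-AT equation, in which the prefactor $(1-\eta)$ multiplying the adversarial term vanishes.

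From this cancellation the first conclusion follows quickly. The surviving first-order conditions are exactly the normal-training conditions, so the only admissible symmetric solution again has $b=0$, $W_i=0$ for $i\ge2$, and $W_1$ solving the \emph{same} equation as NT. Thus the converged slope is $W_1=W_1^{\rm NT}$, strictly smaller than the maximiser $W_1^{\max}$ attained by restricted AT at its optimal budget $\eta^\ast\approx0.9992$ (Proposition~1). Equivalently, because the adversarial loss is pinned at the constant value $\tfrac14$ whenever the examples sit on the boundary, the adversarial term carries no gradient able to sharpen the classifier, so unrestricted AT inherits none of the acceleration or confidence gain of restricted AT and cannot reach the best-robustness model.

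To make the failure of the decision boundary itself explicit, I would then reinstate the small DeepFool overshoot, placing the two adversarial points slightly \emph{beyond} the boundary at $x^\ast\pm\delta$ on the \emph{wrong} sides of their labels. A short computation of $\partial\mathcal{L}/\partial W_1$ at the symmetric configuration shows the overshoot now contributes a strictly positive term of order $\sigma(W_1\delta)\sigma'(W_1\delta)\,\delta$, which forces the balance to a slope even below $W_1^{\rm NT}$; simultaneously the overshoot maps a boundary offset $x^\ast$ to an amplified target $(1+\text{overshoot})\,x^\ast$, so the adversarial term pushes the boundary \emph{away} from the perpendicular bisector rather than toward it. Weighing this outward push against the inward pull of the clean term is what certifies that the symmetric, maximal-margin configuration is not a stable attractor of the procedure.

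The main obstacle is the self-consistency of the moving targets. In restricted AT the adversarial inputs are fixed offsets, so the loss is a static function and the ordinary stationarity and second-derivative analysis suffices; here the adversarial points are functions of $(\mathbf{W},b)$ through the boundary, so the object to analyse is a fixed point of the alternating ``generate-then-minimise'' map rather than a minimiser of a fixed loss. The degeneracy identified above, zero adversarial gradient and constant adversarial loss on the boundary, makes the symmetric point only a borderline stationary point, and deciding whether the overshoot tips it from ``no confidence gain'' to genuine instability of the boundary is the delicate step; I would treat it by linearising the regeneration map about $b=0$ and comparing the overshoot-induced amplification factor with the restoring curvature of the clean loss.
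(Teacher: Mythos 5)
Your key computation---that boundary-pinned adversarial points carry residuals $\pm\tfrac12$ whose gradient contributions cancel exactly, so the first-order conditions collapse to those of NT---is the same observation the paper's proof is built on (there it appears as the factor $[2\sigma(W_1(1-\eta_2)+b)-1]$ vanishing when $W_1(1-\eta_2)+b=0$). However, the conclusion you draw from it in your second paragraph does not establish the proposition, and in fact undercuts it. In this paper ``best robustness'' means the decision boundary lies on the perpendicular bisector ($b=0$, $W_i=0$ for $i\ge 2$); the magnitude of $W_1$ measures confidence, not robustness, and by Proposition~1 plain NT \emph{does} attain the best-robustness model. So ``the fixed point of unrestricted AT is the NT solution with $W_1=W_1^{\rm NT}<W_1^{\max}$'' is not a failure of robustness---if that were the whole story, unrestricted AT would reach the bisector exactly as NT does. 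The substance of the paper's proof is dynamical and lives at the tilted states you never analyze: it fixes a configuration with $b\neq 0$ (e.g.\ $W_1=5$, $b=2.5$, so the current boundary sits at $-0.5$) and shows that there the AE term---pinned to the current, wrong boundary ($\eta_2=1.5$)---contributes exactly zero to $\partial\mathcal{L}/\partial b$, while the clean-data contribution is tiny because both data points sit deep in the sigmoid's saturated regime; hence the total gradient is nearly zero and the boundary essentially cannot be corrected toward the bisector, whereas restricted AT with a symmetric budget ($\eta_2\le 1$) produces a near-maximal $\partial\mathcal{L}/\partial b$. That comparison is what ``cannot lead the model to the one with the best robustness'' means here.

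Your third and fourth paragraphs gesture at the right repair (the AEs are functions of the current model, so one must analyze the generate-then-minimize iteration rather than a static loss), but the decisive step is explicitly deferred: you yourself flag that whether the DeepFool overshoot turns the degenerate stationary point into a genuine instability ``is the delicate step,'' to be handled later by linearizing the regeneration map. Note also that at the symmetric point even the overshot AEs contribute exactly zero to $\partial\mathcal{L}/\partial b$ (the two overshoots are equal there, and $\sigma'$ is even while the residuals remain opposite, so that component still cancels; only $\partial\mathcal{L}/\partial W_1$ picks up your positive term of order $\sigma(W_1\delta)\sigma'(W_1\delta)\,\delta$, which again affects confidence, not the boundary position). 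The claimed ``push away from the bisector'' can therefore only arise in the asymmetric, tilted configuration---precisely the analysis you have not carried out and the paper has. As written, the proposal identifies the correct mechanism but leaves the proposition unproven.
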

\begin{proof}

Similar as restricted AT, by minimize the loss function, we have,
\be
\frac{\partial \mathcal{L}}{\partial b} &=& 
[\sigma(W_1  (-1) + b) - 0]
\sigma^\prime(W_1  (-1) + b) \nonumber\\
&+& [\sigma(W_1  (-1+\eta_1) + b) - 0]
\sigma^\prime(W_1  (-1+\eta_1) + b) \nonumber\\
&+& [\sigma(W_1  (1) + b) - 1]
\sigma^\prime(W_1  (1) + b)  \nonumber\\
&+& [\sigma(W_1  (1-\eta_2) + b) - 1]
\sigma^\prime(W_1  (1-\eta_2) + b)  \nonumber\\
&+& \lambda{\rm sign}(b) = 0 \\
\frac{\partial \mathcal{L}}{\partial W_1} &=& 
[\sigma(W_1  (-1) + b) - 0]
-\sigma^\prime(W_1  (-1) + b) \nonumber\\
&+& [\sigma(W_1  (\eta_1-1) + b) - 0]  \nonumber\\
&& \sigma^\prime(W_1  (\eta_1-1) + b) (\eta_1-1) \nonumber\\
&+& [\sigma(W_1  (1) + b) - 1]
\sigma^\prime(W_1  (1) + b) (1)   \nonumber\\
&+& [\sigma(W_1  (1-\eta_2) + b) - 1]  \nonumber\\
&& \sigma^\prime(W_1  (1-\eta_2) + b) (1-\eta_2)   \nonumber\\
&+& \lambda{\rm sign}(W_1) = 0\\
\frac{\partial \mathcal{L}}{\partial W_i} &=& \lambda{\rm sign}(W_i), i\geq 2.
\ee
Bring in $\eta_1-1 = 1-\eta_2$,
\be
\frac{\partial \mathcal{L}}{\partial b} &=& 
[\sigma(b - W_1)]
\sigma^\prime(b - W_1) \nonumber\\
&+& [\sigma(W_1 + b) - 1]
\sigma^\prime(W_1 + b)  \nonumber\\
&+& [2\sigma(W_1  (1-\eta_2) + b) - 1]  \nonumber\\
&& \sigma^\prime(W_1  (1-\eta_2) + b)  \nonumber\\
&+& \lambda{\rm sign}(b) = 0\\
\frac{\partial \mathcal{L}}{\partial W_1} &=& 
[\sigma(b - W_1) - 0]
\sigma^\prime(b - W_1) (-1) \nonumber\\
&+& [\sigma(W_1  + b) - 1]
\sigma^\prime(W_1 + b)  \nonumber\\
&+& [2\sigma(W_1  (1-\eta_2) + b) - 1] \nonumber\\
&& \sigma^\prime(W_1  (1-\eta_2) + b) (1-\eta_2)   \nonumber\\
&+& \lambda{\rm sign}(W_1) = 0\\
\frac{\partial \mathcal{L}}{\partial W_i} &=& \lambda{\rm sign}(W_i), i\geq 2.
\ee
The second derivative of $\frac{\partial^2 \mathcal{L}_{\rm AT}}{(\partial b)^2}$ of unrestricted AT as follows,
\be
\frac{\partial^2 \mathcal{L}_{\rm AT}}{(\partial b)^2} &=& 
\sigma^\prime(W_1 - b)[1-\sigma(W_1 - b)] \nonumber\\
&& [3\sigma(W_1 - b)-1] \nonumber\\
&+& \sigma^\prime(W_1 + b)[1-\sigma(W_1 + b)] \nonumber\\
&& [3\sigma(W_1 + b)-1] \nonumber\\
&+& \sigma^\prime((1-\eta_1)W_1 - b)[1-\sigma((1-\eta_1)W_1 - b)] \nonumber\\
&& [3\sigma((1-\eta_1)W_1 - b)-1] \nonumber\\
&+& \sigma^\prime((1-\eta_2)W_1 + b)[1-\sigma((1-\eta_2)W_1 + b)] \nonumber\\
&& [3\sigma((1-\eta_2)W_1 + b)-1]. \label{eq:uat:3}
\ee

If $\eta_1-1 = 1-\eta_2 = -\frac{b}{W_1}$, on the current decision boundary, the ``$1-\eta_2$''(AEs) term come to zero, then
\be
\frac{\partial \mathcal{L}}{\partial b} &=& 
[\sigma(b - W_1)]
\sigma^\prime(b - W_1) \nonumber\\
&+& [\sigma(W_1 + b) - 1]
\sigma^\prime(W_1 + b)  \nonumber\\
&+& \lambda{\rm sign}(b) = 0 \label{eq:uat:1}\\
\frac{\partial \mathcal{L}}{\partial W_1} &=& 
[\sigma(b - W_1) - 0]
\sigma^\prime(b - W_1) (-1) \nonumber\\
&+& [\sigma(W_1  + b) - 1]
\sigma^\prime(W_1 + b)  \nonumber\\
&+& \lambda{\rm sign}(W_1) = 0 \label{eq:uat:2}\\
\frac{\partial \mathcal{L}}{\partial W_i} &=& \lambda{\rm sign}(W_i), i\geq 2.
\ee
Although the loss function has same minimize as NT, but the ``$1-\eta_2$''(AEs) term in the gradient equal to $0$, the gradient is almost half of which of NT. \textbf{Thus, unrestricted AT can not ensure better result than NT.}

\begin{figure}[t]
  \centering
  \includegraphics[width=\linewidth]{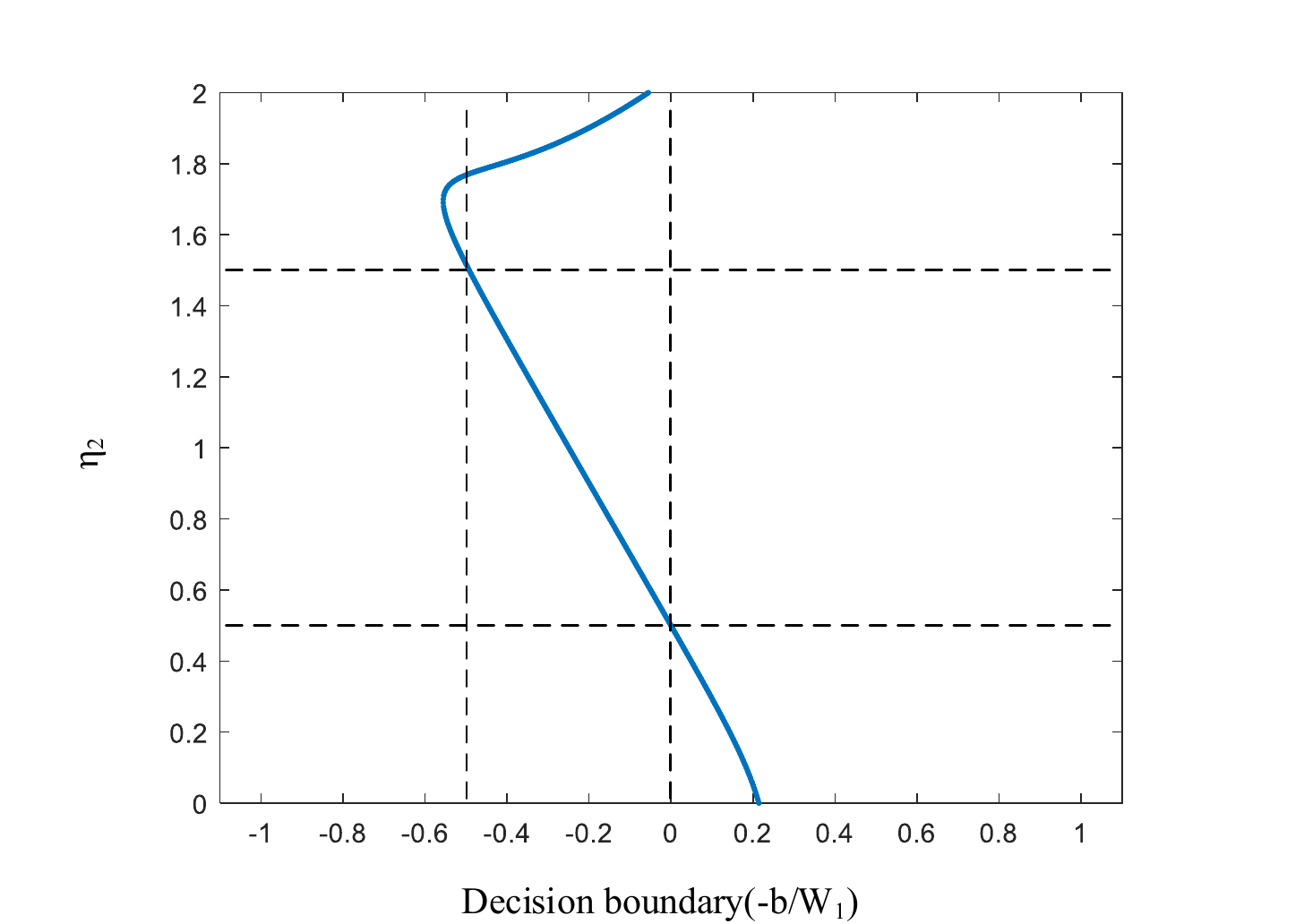}
  \includegraphics[width=\linewidth]{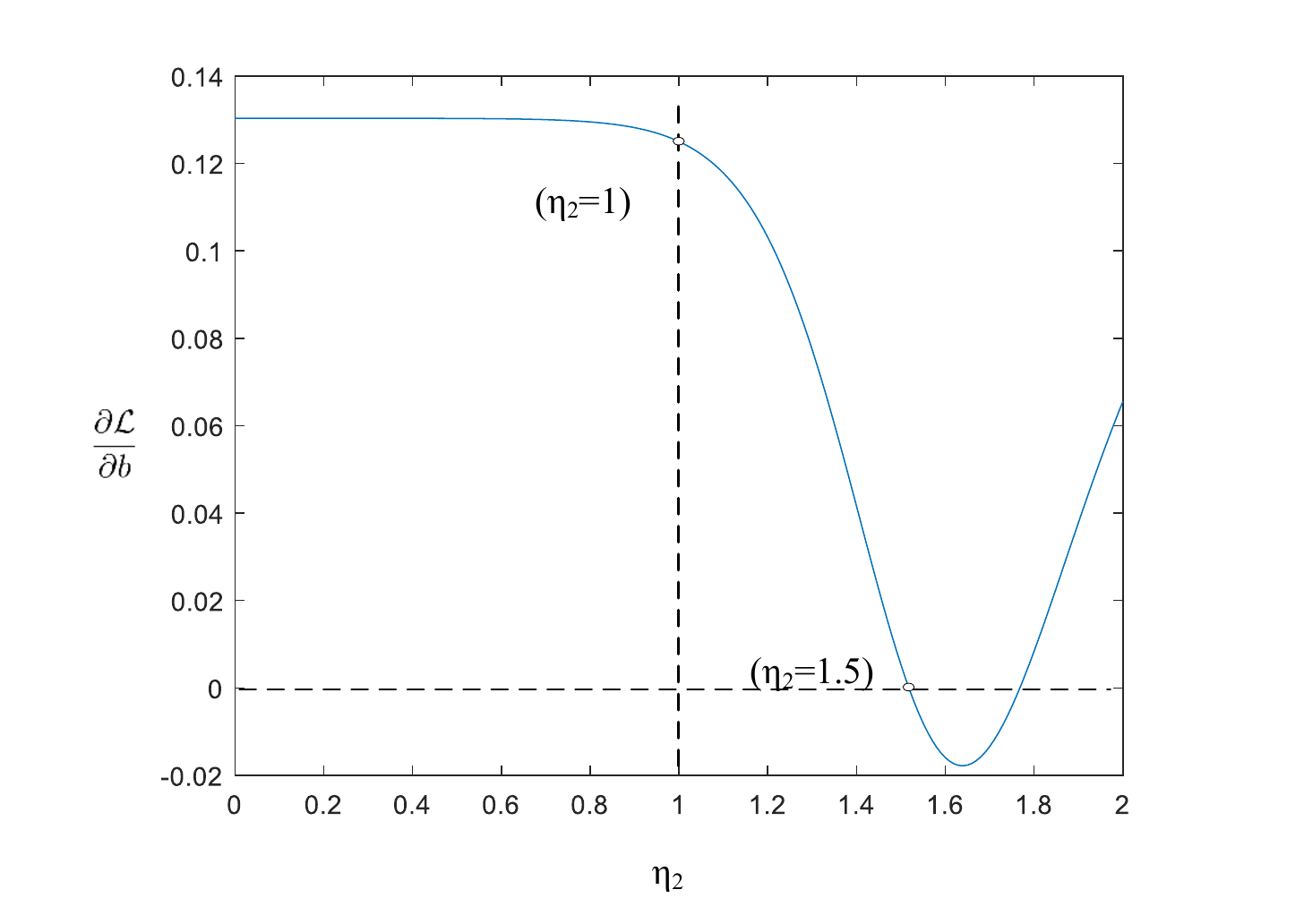}
  \caption{(Upper) The decision boundary varying with $\eta_2$ with $W_1 = 5$ and $\eta_1 = 0.5$.
  (Lower) $\frac{\partial \mathcal{L}}{\partial b}$ with varying $\eta_2$ with $W_1 = 5$, $b=2.5$ and $\eta_1 = 0.5$.
  }
  \label{fig:uat_up}
\end{figure}

Otherwise during unrestricted AT, there must exist a state that $b \neq 0$ (not convergence to its best). 
In order to analyze the effects of AEs crossing the decision boundary, we test different $\eta_2$ as in Figure~\ref{fig:uat_up}(Upper)~\&~(Lower), different $\eta_2$ may lead different converge decision boundary with different $\frac{\partial \mathcal{L}}{\partial b}$. By fixing $\eta_1$, the convergence limit of $b$ changes with parameters $\eta_2$. Due to assume the current decision boundary located at $-0.5$, AEs in unrestricted AT with $\eta_2 = 1.5$ lead $\frac{\partial \mathcal{L}}{\partial b}$ very close to $0$, almost impossible to improve the decision boundary effectively. Smaller $\eta_2$ can produce more improvement until $\eta_2=0.5$, achieve restricted AT. $\frac{\partial \mathcal{L}}{\partial b}$ reach its maximum when $\eta_2 = 0$. If $\eta_2 \leq 1$, $\frac{\partial \mathcal{L}}{\partial b}$ almost reach its maximum. This means unrestricted AT still has much room for improvement.
\textbf{Thus, the ``$1-\eta_2$'' term will influence the results and deviating the model to the one with the best robustness.}

\end{proof}

\subsection{Blind Adversarial Training}

\begin{prop}
For classifying two points $\mathbf{x}^1, \mathbf{x}^2$ with a $1$-layer perceptron model, 
BAT also can lead the model to the one with the best robustness, and has the same convergence property as restricted AT with best budget.
\end{prop}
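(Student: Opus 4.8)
The plan is to exploit the symmetry of the perfect decision boundary to collapse the cutoff-scale construction exactly onto restricted AT, and then invoke Proposition~1 directly. First I would evaluate the BAT adversarial examples at the candidate optimum $b=0$, $W_i=0$ for $i\geq 2$. There the classifier is symmetric about $\hat{\mathbf{x}}_1=0$, so the DeepFool perturbations of $\hat{\mathbf{x}}^1$ and $\hat{\mathbf{x}}^2$ both reach the boundary with equal $\ell_2$ norm $\eta_1=\eta_2=1$. Hence the adaptive cutoff budget is $\varepsilon=\mathbb{E}\|\delta(\mathbf{x})\|=1$, the cutoff leaves both perturbations untouched, and the scale step shrinks each to magnitude $\rho$. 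The resulting cutoff-scale AEs are $\hat{\mathbf{x}}^1_{\rm CoS}=(-1+\rho,\hat{0})^{\mathrm{T}}$ and $\hat{\mathbf{x}}^2_{\rm CoS}=(1-\rho,\hat{0})^{\mathrm{T}}$, i.e. exactly the restricted-AT AEs of Lemma~2 with prescribed budget $\eta=\rho$.

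Given this identification, the BAT objective $\mathcal{L}_{\rm BAT}=\sum_{i}\tfrac12[\sigma(\mathbf{W}\hat{\mathbf{x}}^i+b)-\mathbf{y}^i]^2+\tfrac12[\sigma(\mathbf{W}\hat{\mathbf{x}}^i_{\rm CoS}+b)-\mathbf{y}^i]^2+\lambda\|\mathbf{W}\|_2+\lambda\|b\|_2$ coincides, near $b=0$ and $W_i=0$ ($i\geq 2$), with the restricted-AT loss $\mathcal{L}_{\rm AT}$ at budget $\eta=\rho$. The first-order stationarity conditions already solved in Proposition~1 then apply verbatim: $b=0$, $W_i=0$ ($i\geq2$), and $W_1$ fixed by \eqref{eq:eta} with $\eta=\rho$, is a critical point, so BAT attains the perfect decision boundary and hence the best robustness. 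For the convergence statement I would reuse the three second-derivative expressions from Proposition~1 evaluated at $\eta=\rho$; choosing the scale near the maximizing budget (numerically $\rho\approx0.9992$ in Figure~\ref{fig:eta}) makes the Hessian attain essentially the same large values, so BAT inherits the same acceleration as restricted AT with its best budget.

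A point worth emphasising is that the scale factor $\rho<1$ is precisely what rescues BAT from the degeneracy of Proposition~2. With $\rho=1$ the DeepFool AE sits on the current boundary, where the factor $2\sigma(\cdot)-1$ appearing in the adversarial gradient vanishes, so the AE contributes nothing and the model stalls exactly as in unrestricted AT. Taking $\rho<1$ pushes the AE strictly interior, keeps $2\sigma(\cdot)-1\neq0$, and restores a nonvanishing adversarial gradient of the same form as the restricted-AT gradient.

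The step I expect to be the main obstacle is the off-equilibrium behaviour, since the clean reduction holds exactly only at the symmetric optimum. When $b\neq0$ the two DeepFool perturbations are unequal (still $\eta_1+\eta_2=2$ but $\eta_1\neq\eta_2$), the adaptive budget remains their mean $\varepsilon=1$, and the cutoff clips only the larger of the two before scaling, producing asymmetric AEs. I would handle this by linearising $\partial\mathcal{L}_{\rm BAT}/\partial b$ about the optimum and checking that the asymmetric clipping contributes only at higher order, so that the linearised dynamics reduce to those of restricted AT with $\eta=\rho$, drive $b\to0$, and let the self-chosen budget stabilise at $\rho$. Combined with the critical-point and Hessian computations above, this delivers both claims of the proposition.
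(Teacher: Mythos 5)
Your proposal is correct and follows essentially the same route as the paper's own proof: you reduce the cutoff-scale AEs to restricted-AT AEs with an effective budget (so that Proposition~1's stationarity and second-derivative analysis apply), you use the fact that an AE sitting on the current decision boundary has a vanishing gradient contribution (the degeneracy of Proposition~2) to explain why cutoff/scale must push the AEs strictly interior, and you argue the ideal model is a fixed point of the BAT dynamics with the acceleration of the best-budget restricted AT. The paper's appendix makes the same three moves, merely in a different order (it leads with the off-equilibrium cutoff mechanism, clipping $\eta_2$ from $1.5$ to the mean $1$ to restore $\partial\mathcal{L}/\partial b$, and treats the equilibrium identification last), and its treatment of the off-equilibrium regime is no more rigorous than your linearization sketch.
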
 
\begin{proof}
The proposed BAT can achieve such a good results, due to the cutoff-scale strategy therein.

\textbf{Cutoff: }
Cutoff using average norm of AEs as the budget, to prevent AEs beyond the perfect decision boundary. In numerical calculation, it is impossible to exact know the perfect decision boundary, but the average of AEs, to some extent, reflects the nature and give a path to potential guess of perfect decision boundary. 

As in this perceptron model in Figure~\ref{fig:uat_up}~(Lower), AEs be on the current decision boundary $\eta_2=1.5$, lead $\frac{\partial \mathcal{L}}{\partial b} \sim 0$, make the unrestricted AT worse. But when $\eta_2 <1$, $\frac{\partial \mathcal{L}}{\partial b}$ reach maximum, the model updated quickly. Cutoff used the average of AEs satisfy $\varepsilon = \frac{\eta_1 + \eta_2}{2} = 1$, equivalent to reset the budget to $\eta_1 = 0.5, \eta_2 = 1$, can enlarge the $\frac{\partial \mathcal{L}}{\partial b}$ and lead the model converge faster as in Figure~\ref{fig:uat_up}. \textbf{Thus, cutoff can make BAT algorithm achieving the similar effect as restricted AT without using prescribed budget.}

\textbf{Scale: }
As in Figure~\ref{fig:l2at_all}, the maximum second derivative of loss function appears at a little distance from the decision boundary, scale can lead the AEs from the decision boundary to such position. Meanwhile, most unconstrained AEs may produce AEs beyond the current decision boundary, scale can pull these AEs back into the decision boundary, make the algorithm convergence faster. \textbf{Thus, scale can make BAT algorithm achieving the similar effect as restricted AT with best budget.} 

\textbf{Convergence property: }
Once we obtain the ideal model (the decision boundary is actually the centerline of the data with different labels) and optimal DeepFool AEs (just slightly beyond the decision boundary), $\mathbb{E}||\delta(\mathbf{x})||$ can represent the mean distance between the data and the decision boundary.
Thus, the cutoff will directly cut off the AEs with relatively large strength, while the scale will ensure that the AEs will not extend beyond the decision boundary, especially for the AEs with small strength. They both have little influence on the ideal model. For example, as shown in Figure \ref{fig:tcc_app} (d), the cutoff and scale will maintain the AEs not to affect the model's updating.
\textbf{Thus, it is available to know the result of final convergence is the result of restricted AT with best budget, has the same convergence property.}

\end{proof}

So in conclusion, the proposed BAT can perform the best facing two points $\mathbf{x}^1, \mathbf{x}^2$ classification problem with a $1$-layer perceptron model. We can extend this conclusion to more general problems, and conjecture that for general classification neural network, BAT can dynamically adjust a nonuniform budget, seek to provide a path to potentially guess the perfect decision boundary, and finally reach the model with best robustness.

\end{document}